\documentclass[twoside,11pt]{article}

\usepackage{blindtext}

\usepackage{algorithm}
\usepackage{algorithmicx}
\usepackage{algcompatible}
\usepackage{xcolor}
\usepackage{graphicx}
\usepackage{enumitem}
\setlist[itemize]{leftmargin=*, itemsep=3pt, topsep=3pt}
\usepackage{bbm}
\usepackage{mathtools}
\usepackage{multirow}
\usepackage[normalem]{ulem}
\usepackage{booktabs}
\usepackage{tikz}

\usepackage{balance}

\usepackage{url}
\usepackage{pgfplots}
\usepackage{footnote}
\mathtoolsset{showonlyrefs}

\usepackage{silence}
\ActivateWarningFilters[captions]
\usepackage{subcaption}
\DeactivateWarningFilters[captions]

\usepackage[abbrvbib, preprint]{jmlr2e}
\usepackage{hyperref}
\usepackage{url}

\hypersetup{hidelinks}

\usepgfplotslibrary{groupplots}
\usetikzlibrary{arrows.meta,positioning,calc,fit,backgrounds}
\pgfplotsset{compat=1.18}

\newlength{\figwidth}
\newlength{\figheight}

\newcommand{\prob}{\mathbb{P}}
\newcommand{\R}{\mathbb{R}}
\newcommand{\E}{\mathbb{E}}

\newcommand{\X}{\mathbb{X}}
\newcommand{\Y}{\mathbb{Y}}
\newcommand{\D}{\mathcal{D}}
\newcommand{\M}{\mathcal{M}}
\newcommand{\loss}{\mathcal{L}}
\newcommand{\DD}{\mathbb{D}}

\newcommand{\event}{\mathcal{T}}

\newcommand{\Adj}{\text{Adj}}
\newcommand{\Renyi}{\text{Rényi} }

\newcommand{\Simplex}{S^{d-1}}
\newcommand{\softmax}{\text{softmax} }
\DeclareMathOperator{\ps}{PoissonSample}
\newcommand{\onevec}{\mathbbm{1}}

\usepackage{footnote}

\usepackage{lastpage}
\jmlrheading{23}{2026}{1-\pageref{LastPage}}{1/21; Revised 5/22}{9/22}{21-0000}{Huaiyuan Rao, Calvin Hawkins, Alexander Benvenuti, Matthew Hale}
\ShortHeadings{End-to-End Differential Privacy in Training Deep Neural Network Classifiers}{Rao, Hawkins, Benvenuti and Hale}
\firstpageno{1}

\begin{document}

\title{End-to-End Differential Privacy in Training Deep Neural Network Classifiers}

\author{\name Huaiyuan Rao \email hrao43@gatech.edu \\
       \addr School of Electrical and Computer Engineering, \\
       Georgia Institute of Technology\\
       Atlanta, GA 30332, USA
       \AND
       \name Calvin Hawkins \email chawkins64@gatech.edu \\
       \addr School of Electrical and Computer Engineering, \\
       Georgia Institute of Technology\\
       Atlanta, GA 30332, USA
       \AND
       \name Alexander Benvenuti \email abenvenuti3@gatech.edu \\
       \addr School of Electrical and Computer Engineering, \\
       Georgia Institute of Technology\\
       Atlanta, GA 30332, USA
       \AND
       \name Matthew Hale \email matthale@gatech.edu \\
       \addr School of Electrical and Computer Engineering, \\
       Georgia Institute of Technology\\
       Atlanta, GA 30332, USA
       }

\editor{My editor}
\maketitle

\begin{abstract}%   <- trailing '%' for backward compatibility of .sty file
Differentially private machine learning enables model training on sensitive data while ensuring that individual data is unlikely to be recoverable from the parameters of the resulting model.
However, existing work often privatizes both training inputs and their labels, and these protections 
may be conservative when labels are public or can be safely made public.
Therefore, in this work 
we propose a novel private training framework that instead 
privatizes training inputs 
while keeping labels public.
We consider neural networks with softmax output layers, and thus 
the mapping from training inputs to the 
output of the 
softmax layer is a 
mapping onto the unit simplex. 
We randomize softmax outputs during training by applying the Dirichlet mechanism to enforce differential privacy
for the training inputs, hence the ``end-to-end'' label. 
Because training data is reused across multiple training epochs, we use the notion of \Renyi differential privacy to 
formulate tight bounds on the strength of privacy provided by the Dirichlet mechanism across
repeated uses. 
We show empirically that we 
attain new state-of-the-art accuracy when training from scratch on CIFAR10, MNIST, MedMNIST, FashionMNIST, and SVHN across all privacy budgets evaluated.
Notably, 
when implementing~$(\epsilon, \delta)$-differential privacy 
with $\delta=10^{-5}$, 
we improve the prior state-of-the-art accuracy from $78.37\%$ to $88.17\%$ at $\epsilon=4$ on CIFAR10, and our approach
has $82.96\%$ accuracy even for $\epsilon=1$, which significantly outperforms prior work. 
\end{abstract}

\begin{keywords}
    differential privacy, \Renyi differential privacy, Dirichlet mechanism, unit simplex, classification 
\end{keywords}

\section{Introduction}
Machine learning (ML) models trained on sensitive information such as medical images, biometric data, or financial records are 
vulnerable to privacy attacks. Examples include reconstruction attacks~\citep{rigaki2023survey}, which recover sensitive data from learned parameters,
or membership inference attacks~\citep{shokri2017membership}, 
which determine whether a specific individual's record was included in a training data set.
Other types of attacks are described in~\citep{fredrikson2015model, ganju2018property, melis2019exploiting}. 
Differential privacy~\citep{dwork2014algorithmic} 
is a general-purpose privacy framework that protects against many such attacks, and 
differentially private ML has been used to train models in such a way that an adversary cannot 
reliably use a model to draw accurate conclusions about the data used in training it. 
In this paper, we focus on using differential privacy in training deep neural networks
for classification tasks. 

\begin{figure}[!t]
    \centering
    \hspace*{-10mm}% 
    \setlength{\figwidth}{0.5\textwidth}
    \setlength{\figheight}{5cm}
    % ============================================
% Figures/Intro/combine.tex
% ============================================

% ---- Colors ----
\providecolor{cbTeal}  {HTML}{66C2A5}
\providecolor{cbOrange}{HTML}{FC8D62}
\providecolor{cbPurple}{HTML}{8DA0CB}
\providecolor{cbPink}  {HTML}{E78AC3}
\providecolor{cbGreen} {HTML}{A6D854}
\providecolor{cbYellow}{HTML}{FFD92F}
\providecolor{cbGrey}  {HTML}{B3B3B3}

% ---- TikZ styles ----
\tikzset{
    box/.style={
        rectangle, rounded corners=3pt, draw, thick,
        align=center, inner sep=4pt, font=\small
    },
    input/.style    ={box, draw=cbTeal!80!black,   fill=cbTeal!15},
    model/.style    ={box, draw=black!70,          fill=cbGrey!25},
    logits/.style   ={box, draw=cbOrange!80!black, fill=cbOrange!15},
    expclip/.style  ={box, draw=cbGreen!70!black,  fill=cbGreen!15},
    dirich/.style   ={box, draw=cbPink!75!black,   fill=cbPink!15},
    loss/.style     ={box, draw=cbPurple!80!black, fill=cbPurple!20},
    fwd/.style      ={-{Latex[length=2mm]}, thick, cbTeal!70!black},
    fwd2/.style     ={-{Latex[length=2mm]}, thick, cbOrange!80!black},
    fwd3/.style     ={-{Latex[length=2mm]}, thick, cbGreen!60!black},
    fwd4/.style     ={-{Latex[length=2mm]}, thick, cbPink!70!black},
    fwd5/.style     ={-{Latex[length=2mm]}, thick, cbPurple!80!black},
    bwd/.style      ={-{Latex[length=2mm]}, thick, dashed, cbYellow!60!black}
}

\centering
% ============================================
% LEFT: Vertical pipeline diagram
% ============================================
% ============================================
% LEFT: Vertical pipeline diagram
% ============================================
\begin{subfigure}[t]{0.50\textwidth}
  \centering
  \resizebox{\linewidth}{!}{
  \begin{tikzpicture}[
      font=\small,
      node distance=4.5mm,
  ]
    \node[input]                    (input)  {Input $x\in \X$};
    \node[model,   below=of input]  (model)  {Model $f_\theta:\X \to \mathbb{R}^d$};
    \node[logits,  below=of model]  (logits) {Logits $z=f_\theta(x) \in \mathbb{R}^d$};
    \node[expclip, below=of logits] (exp)    {Softmax Layer\\$s(z)=\softmax({z}) \in S^{d-1}$};
    \node[dirich,  below=of exp]    (dir)    {Dirichlet sample\\$\tilde{p}\sim\mathrm{Dir}(rs(z)+\alpha\onevec_d)$};
    \node[loss,    below=of dir]    (loss)   {NLL Loss $\ell=-\log(\tilde{p}_{y})$};

    \node[font=\small, right=8mm of loss] (ystar) {True label $y$};

    \node[draw=cbPink!75!black, dashed, thick, rounded corners=4pt,
          inner sep=2.5mm, fit=(exp)(dir)] (priv) {};
    \node[cbPink!75!black, font=\small, right=1mm of priv.east,
          rotate=-90, anchor=south] {Privacy mechanism};

    \draw[fwd]  (input)  -- (model);
    \draw[fwd]  (model)  -- (logits);
    \draw[fwd2] (logits) -- (exp);
    \draw[fwd3] (exp)    -- (dir);
    \draw[fwd4] (dir)    -- (loss);
    \draw[fwd5] (ystar)  -- (loss);

    % UPDATED: Added (SGD) to the backprop node
    \draw[bwd] (loss.west)
        -- ++(-12mm,0) coordinate (b_bot)
        -- (b_bot |- model.west)
        node[midway, left, font=\footnotesize, black, align=center] {$\nabla_\theta\ell$\\Backprop.\\(SGD)}
        -- (model.west);
  \end{tikzpicture}
  }
  \caption{Training pipeline with the Dirichlet mechanism.}
  \label{fig:pipeline}
\end{subfigure}%
\hfill
% ============================================
% RIGHT: Accuracy vs epsilon plot
% ============================================
\begin{subfigure}[t]{0.48\textwidth}
  \centering
  \begin{tikzpicture}
  \begin{axis}[
      width=\linewidth,
      height=0.75\linewidth,
      xlabel={Privacy budget $\epsilon$},
      ylabel={Test Accuracy (\%)},
      xmin=0.5, xmax=8.5,
      ymin=50, ymax=95,
      xtick={1,2,3,4,5,6,7,8},
      ytick={50,60,70,80,90},
      grid=major,
      grid style={dashed, gray!30},
      legend pos=south east,
      legend cell align={left},
      legend style={font=\tiny, draw=gray!60,
                    fill=white, fill opacity=0.9, text opacity=1,
                    row sep=-1pt, inner sep=3pt},
      tick label style={font=\small},
      label style={font=\small},
      major tick length=3pt,
  ]
  \addplot[color=cbPurple!80!black, mark=*, mark size=1.4pt, thick]
      coordinates {(1,56.8)(2,64.9)(3,69.3)(4,71.9)(6,77.0)(8,79.5)};
  \addlegendentry{DP-SGD~\citep{de2022unlocking}}

  \addplot[color=cbTeal!80!black, mark=*, mark size=1.4pt, thick]
      coordinates {(4,78.37)(4.46,79.0)};
  \addlegendentry{\citet{monir2024differentially}}

  \addplot[color=cbOrange!80!black, mark=*, mark size=1.4pt, thick]
      coordinates {(1,82.96)(2,86.34)(3,87.42)(4,88.17)(6,88.54)(8,88.61)};
  \addlegendentry{This work}
  \end{axis}
  \end{tikzpicture}
  \caption{Test accuracy vs.\ privacy budget on CIFAR10.  
  }
  \label{fig:accuracy}
\end{subfigure}
    \caption{Overview of the proposed private training framework. (a) The model $f_\theta$ first outputs logits that pass through a softmax layer, then a private element of the unit simplex is sampled from a Dirichlet distribution, and it 
    is used to compute the Negative Log-likelihood (NLL) loss. Then the loss is used to compute gradients and backpropagate to update the model parameters~$\theta$. (b) On CIFAR10, our method consistently outperforms DP-SGD~\citep{de2022unlocking} and \citet{monir2024differentially} across all privacy budgets $\epsilon$. For example, our method achieves over $82$\% accuracy when $\epsilon=1$, 
    while the prior work~\citep{monir2024differentially} achieves $79\%$ accuracy when $\epsilon=4.46$.        
    }
\end{figure}

In some real-world applications, training inputs and the labels for those inputs do not have the same privacy risks. 
For example, in medical imaging, chest X-rays used for pneumonia detection may contain highly sensitive patient biometric information~\citep{packhauser2022deep}, whereas the associated diagnostic labels may already be accessible via public health registries~\citep{wang2017chestx}.
In recommender systems, users publicly disclose their preferences through item ratings, while their personal social and demographic information (e.g., gender, political affiliation, and ethnicity) may be considered sensitive~\citep{bhagat2014recommending}.
In financial applications, individual transaction histories are sensitive, while the categories such as risk tiers are 
often publicly defined. 
For example, in credit underwriting, borrowers are often publicly grouped into established tiers like ``Prime'', ``Near-Prime'', or ``Subprime''~\citep{bazarbash2019fintech}. 
In these and other applications, 
the training inputs often contain sensitive information about an individual, while the labels of that data are not sensitive
and/or may already be public. Motivated by such applications, 
the question we address is the following: 
\begin{quote}
\textit{Can we train ML models that provide
differential privacy to sensitive training inputs while maintaining high accuracy of the models?}
\end{quote}

In this paper, we introduce a novel differentially private learning framework that injects calibrated randomness 
into the outputs of \softmax layers, which enables model training with private neural network inputs and public labels. 
Our approach decomposes the training process into two stages: (i) computing a distribution that enforces a desired
level of differential privacy, and (ii) randomizing \softmax outputs at every training step by using 
the Dirichlet mechanism~\citep{ponnoprat2021dirichlet},
which perturbs softmax outputs when computing the training loss in order to enforce differential privacy.
An illustration of the proposed framework is shown in Figure~\ref{fig:pipeline}.
As shown in Figure~\ref{fig:accuracy}, our method significantly outperforms DP-SGD~\citep{de2022unlocking}
and other prior work~\citep{monir2024differentially} for all of the privacy budgets that were evaluated.

% \begin{figure}[!t]
%     \centering
%     \hspace*{-10mm}% 
%     \setlength{\figwidth}{0.5\textwidth}
%     \setlength{\figheight}{5cm}
%     \input{Figures/Intro/combine}
%     \caption{Overview of the proposed private training framework. (a) The model $f_\theta$ first outputs logits that pass through a softmax layer, then a private element of the unit simplex is sampled from a Dirichlet distribution, and it 
%     is used to compute the Negative Log-likelihood (NLL) loss. Then the loss is used to compute gradients and backpropagate to update the model parameters~$\theta$. (b) On CIFAR10, our method consistently outperforms DP-SGD~\citep{de2022unlocking} and \citet{monir2024differentially} across all privacy budgets $\epsilon$. For example, our method achieves over $82$\% accuracy when $\epsilon=1$, 
%     while the prior work~\citep{monir2024differentially} achieves $79\%$ accuracy when $\epsilon=4.46$.        
%     }
% \end{figure}

\subsection{Related Work}
Differentially private stochastic gradient descent (DP-SGD)~\citep{abadi2016deep} is a widely used differentially private ML algorithm, which clips per-sample gradients and adds calibrated Gaussian noise to the gradient at each training step. 
While DP-SGD provides strong privacy guarantees to both training inputs and their labels, 
it introduces a substantial privacy-utility tradeoff: even at moderate privacy budgets, there is a large accuracy gap between models trained with DP-SGD and their non-private baselines. 
Many follow-up works have sought to reduce the privacy-utility gap of DP-SGD~\citep{de2022unlocking,tramer2020differentially,bu2023automatic,sander2023tan, papernot2021tempered, cheng2022dpnas}.
One recent body of work has privatized only labels while leaving training inputs public,
which has been termed ``Label-DP''~\citep{ghazi2021deep, malek2021antipodes, busa2023label, ghazi2022regression, esfandiari2022label}. 
By assuming that training inputs are non-sensitive, Label-DP reduces the amount of noise injected during training and achieves substantially better accuracy than DP-SGD.

The complementary case, in which inputs are sensitive but labels are public, has received less attention.
Several works~\citep{fan2018image, croft2021obfuscation, luo2025dpo, xue2021dp} privatize input data before applying a training procedure, rather than developing a new training procedure that is designed to enforce privacy.
For example,~\citet{fan2018image} introduces a differentially private pixelization method that partitions an image into grid cells and perturbs each cell's average pixel value with Laplace noise.
However, the methods in~\citep{fan2018image, croft2021obfuscation, luo2025dpo, xue2021dp}
release a private image rather than a trained ML model, 
and their evaluations focus on re-identification resistance rather than classification accuracy. 
The goals of that work are different from the
current paper, and a direct comparison would be ill-defined.
The closest prior work to ours is~\cite{monir2024differentially}, which addresses the setting of sensitive inputs and public labels by injecting calibrated Gaussian noise into the penultimate layer of a deep neural network in a way similar to DP-SGD. 
However, the variance of privacy noise scales with layer width,
which can harm accuracy, 
and per-sample gradient clipping introduces additional computational cost.
We show in Section~\ref{sec:evaluation} that our approach provides a $45.31\%$ 
reduction in the classification error on CIFAR10
relative to~\cite{monir2024differentially} when $\epsilon=4.0$. 

Privacy amplification by subsampling~\citep{wang2019subsampled, mironov2019r} allows DP-SGD-style methods to use the 
moments accountant to obtain privacy guarantees over many training steps that are stronger than standard
composition-type results for differential privacy. 
The core idea is to apply a privacy mechanism to randomly sampled minibatches from data sets and track the log moment generating function of the privacy loss. Composing these moments ensures a tight bound on the accumulated privacy loss across all training steps.
Most existing work based on subsampling
uses the Gaussian mechanism, which is the standard mechanism used in DP-SGD.
To the best of our knowledge, there is no prior work that has used 
the Dirichlet mechanism~\citep{ponnoprat2021dirichlet} under subsampling.
Our work does so, and thus we address not only the setting
of learning from sensitive inputs with public labels, but we also provide the first subsampled privacy analysis of the Dirichlet mechanism.

\subsection{Summary of Contributions}
In this work, we make the following contributions:
\begin{itemize}
    \item We propose a novel differentially private training framework that generates private softmax outputs using the Dirichlet mechanism  (Algorithm~\ref{alg:FDP-SGD}).
    \item We provide a formal privacy analysis using subsampling amplification that gives a tight bound on the privacy guarantees of Algorithm~\ref{alg:FDP-SGD} (Theorem~\ref{theorem:MA}).
    \item We show that, on average, the implementation
    of privacy preserves the direction of gradients while scaling
    their magnitude, and
    we use this result to demonstrate how to select learning
    rates to negate this scaling (Theorem~\ref{thm:main}).   
    \item We empirically evaluate our method on several benchmark data sets     
    and show that it outperforms prior work across all evaluated privacy budgets (Section~\ref{sec:eval}). 
    A depiction of these results is shown in Figure~\ref{fig:accuracy}.    
    Our approach provides a $45.31\%$    
    reduction in the classification error on CIFAR10 relative to
    the method in~\citep{monir2024differentially} when $\epsilon=4.0$.
\end{itemize}

\subsection{Notation}
We use $\R$ to denote the real numbers,~$\R_{\geq0}$ to denote the non-negative real numbers, and $\mathbb{N}$ to denote the positive integers. For $n\in \mathbb{N}$, we use $[n]$ to denote the set $\{1, \dots, n\}$.
We use~$|S|$ to denote the cardinality of a finite set~$S$. 
For $z\in \mathbb{R}^d$, we use $s(z)$ to denote the softmax function with input $z$, where the $i^{th}$ component of the output of the softmax function is defined as
$s_i(z)=\frac{\exp(z_i)}{\sum_{j \in [d]}\exp(z_j)}$.
For $x>0$, the gamma function is defined as $\Gamma(x)=\int_0^\infty t^{x-1} e^{-t}dt$, 
and we write
$\psi(x) = \tfrac{d}{dx}\log(\Gamma(x))$ and
$\psi'(x) = \tfrac{d^2}{dx^2}\log(\Gamma(x))$ for the digamma and trigamma
functions, respectively.
We use $D_{KL}$ to denote the Kullback–Leibler (KL) divergence.
For ${v} \in \R^d_{\geq 0}$ the multivariate beta function is defined as $B(v)= {\prod\limits_{i=1}^{d} \Gamma(v_i)} \bigg/{\Gamma\left(\sum\limits_{i=1}^{d} v_i\right)}$.
For $q\in(0,1)$, we use $\text{Bernoulli}(q)$ to denote the Bernoulli distribution with parameter~$q$, 
where a random variable drawn from this distribution takes the value $1$ with probability $q$
and takes the value~$0$ with probability~$1-q$.
For a non-empty, finite set $S$, we use $\mathrm{Uniform}(S)$ to denote 
the uniform distribution over $S$, where each element $s\in S$ has probability $1/|S|$. 
We define $\onevec_d=[1,\dots, 1]^\top \in \mathbb{R}^d$.
By convention, we have~$\binom{n}{m} = 0$ if~$n < m$. 
We use the Kronecker delta function~$\delta_{ij}$,
where~$\delta_{ij} = 1$ if~$i = j$ and~$\delta_{ij} = 0$ if~$i \neq j$. 
We use~$\textnormal{Cov}(\cdot, \cdot)$ to denote the covariance of two random variables.
Lastly, $\|\cdot\|_2$ is the $\ell^2$ norm, and $\|\cdot\|_\infty$ is the $\ell^\infty$ norm, both are norms on Euclidean spaces.

\section{Preliminaries}
In this section, we review the background needed for the rest of the paper. 
The final layer 
of a classification network typically outputs a vector of real-valued scores called logits.
These logits can be any real number, which makes them difficult to interpret as probabilities
that an input belongs to each class. 
The softmax function maps logits to a probability distribution over classes, i.e., a vector with non-negative entries that sum to one.
Such a vector is an element of the unit simplex. 

\begin{definition}[Unit Simplex]
    Let $d\in \mathbb{N}.$ The unit simplex in $\R^d$ is denoted by $S^{d-1}$ and
    is defined as the set of all element-wise non-negative vectors of length $d$ whose entries sum to $1$, i.e., 
    \begin{equation}
        S^{d-1} = \biggl\{ p \in \R^d : \sum_{i=1}^d p_i = 1, p_j \geq 0 \ \text{for all} \ j \in [d] \ \biggr\}.
    \end{equation}
\end{definition}
We will sometimes refer to elements of the simplex as ``probability vectors''. 

We consider data sets of the form $\D=\{(x_i,y_i) \}_{i=1}^n$ for some~$n \in \mathbb{N}$, where $x_i$ is a training input and $y_i$ is its label for each~$i \in [n]$. 
Let~$\DD=\X \times \Y$ denote the collection of all data sets of interest, where $\X$ is the collection of all training inputs and $\Y$ is the collection of all labels. 
We do not make any assumptions that~$\DD$ is a particular type of space (e.g., Euclidean).
The goal of differential privacy is to 
randomize data in a way that
makes two ``similar'' pieces of sensitive data produce outputs that are ``approximately indistinguishable''. 
Given two data sets $\D$ and $\D'$, the notion of ``similarity" is formalized by an adjacency relation~\citep{dwork2014algorithmic}.

\begin{definition}[Adjacent Data Sets]
\label{def:adj}
    Let $\D, \D' \in \DD$.
    Then $\D$ and $\D'$ are \emph{adjacent} if one can be obtained from the other by adding or removing a single data point; that is, either $\D' = \D \cup \{(x, y)\}$ or $\D = \D' \cup \{(x, y)\}$ for some data point $(x, y)$. We write $\Adj(\D, \D')=1$ if $\D$ and $\D'$ are adjacent and $\Adj(\D, \D')=0$ otherwise.
\end{definition}

Definition~\ref{def:adj} states that two data sets are adjacent if they differ in a single data point.
The notion of ``approximately indistinguishable'' is made precise by the definition of differential privacy itself.

\begin{definition}[Differential Privacy, \citealt{dwork2014algorithmic}]
\label{def:dp}
Let $\delta \in [0, 1)$ and $\epsilon > 0$ be given. A randomized mechanism $\M: \DD  \to \Simplex$ is $(\epsilon, \delta)$-differentially private if for every pair of data sets $\D,\D'\in \DD$ satisfying $\Adj(\D, \D')=1$ and every measurable output set $\event\subseteq\Simplex,$
we have
\begin{equation}
    \prob[\M(\D) \in \event] \leq e^{\epsilon} \cdot \prob[\M(\D')\in \event] + \delta.
\end{equation}
If $\delta=0$, then $\M$ is said to be $\epsilon$-differentially private.
\end{definition}
In Definition~\ref{def:dp}, smaller values of $\epsilon$ and $\delta$ correspond to stronger privacy guarantees.
Typical values of~$\epsilon$ are $0.1$ to $10$~\citep{hsu2014differential}, and typical values of $\delta$ are chosen to be 
much less than $\frac{1}{n}$. 
An appealing property of differential privacy is that it is immune to post-processing 
in the sense that 
arbitrary computations  on differentially private data do not weaken its privacy guarantees.
Given a data set $\D=\{(x_i,y_i) \}_{i=1}^n \in \DD$, suppose that each entry belongs to an individual and the data set is used to train 
a model while keeping~$\D$ differentially private. 
Then, Definitions~\ref{def:adj} and~\ref{def:dp} guarantee that given the trained model or an output it generates, an adversary cannot reliably determine if a specific individual's data is in~$\D$. 

We next introduce \Renyi differential privacy (RDP), which is a generalization of~$(\epsilon, \delta)$-differential privacy.

\begin{definition}[\Renyi Differential Privacy (RDP), \citealt{mironov2017renyi}]\label{def:RDP}
   Fix $\lambda>1$. A randomized mechanism $\M: \DD \to \Simplex$ is $(\lambda, \hat{\epsilon})$-\Renyi differentially private if for all pairs of data sets $\D, \D' \in \DD$ satisfying $\Adj(\D,\D')=1$, we have 
    \begin{equation}
        D_\lambda \bigl(\M(\D) \,\|\, \M(\D') \bigr):= \frac{1}{\lambda-1} \log \E_{y\sim \mathcal{M}(\D)} \biggl[\left(\frac{p_{\mathcal{M}(\D)}(y)}{p_{\mathcal{M}(\D')}(y)}\right)^{\lambda-1} \biggr]\leq \hat{\epsilon},
    \end{equation}
    where $p_{\mathcal{M}(\cdot)}$ denotes the probability distribution of the output of $\M(\cdot).$
    For $\lambda=1$, we define $D_1 \bigl(\M(\D) \,\|\, \M(\D') \bigr) = D_{KL} \bigl(\M(\D) \,\|\, \M(\D') \bigr).$    
\end{definition}

In Definition~\ref{def:RDP}, smaller values of 
$\hat{\epsilon}$ and larger values of $\lambda$ both correspond to stronger privacy guarantees.

At each training step, our method applies a privacy mechanism to 
provide differential privacy to 
a random minibatch rather than the full data set.
Intuitively, for two adjacent data sets $\D$ and $\D',$ the differing record between the data sets is unlikely to be
included in any given minibatch, 
which amplifies the protections of differential privacy. 
This idea is formalized by $\ps$, which is often used in private learning settings.

\begin{definition}[PoissonSample,~\citealt{zhu2019poission}]\label{def:poisson}
    Consider a data set $\D=\{(x_i, \\ y_i)\}_{i=1}^{n} \in \DD$ and select~$q \in (0, 1)$.
    Then, the procedure $\ps$ outputs a random subset~$\D_B \subseteq \D$ with expected cardinality~$q|\D|$ by:
    (i) sampling $\sigma_i \sim \text{Bernoulli}(q)$ independently for each $i\in[n]$ and 
    (ii) outputting the set $\D_B=\{(x_i, y_i) \in \D : \sigma_i=1\}$. 
\end{definition}

Since~$\mathbb{E}\big[|\D_B|\big] = q|\D|$, we refer to~$\D_B$ as a ``$q$-proportion'' of~$\D$. 
When we apply an $(\epsilon,\delta)$-differentially private mechanism to a random $q$-proportion of the data set, 
privacy amplification ensures the mechanism provides 
$\bigl(\log(1+q(e^\epsilon-1))$, $q\delta \bigr)$-differential privacy
to the full data set.
It is straightforward to show that this latter privacy protection is
$(O(q\epsilon), q\delta)$-differential privacy, which is stronger
than~$(\epsilon, \delta)$-differential privacy because~$q \in (0, 1)$. 
This improved privacy guarantee 
can be combined with the composition of RDP to tightly characterize 
the privacy loss over many applications
of a differential privacy mechanism. 
As we show in the next section, 
by composing the privacy loss in the sense of RDP before converting back to standard $(\epsilon, \delta)$-differential privacy, we can run our mechanism many times
(and hence train a neural network for many epochs)
without blowing up the privacy budget $\epsilon$.

\section{Proposed Algorithm}
In this section, we develop a private training framework that provides privacy to training inputs 
while leaving labels public.
During training, our method perturbs the outputs
of a softmax layer by drawing samples from a Dirichlet distribution
to enforce differential privacy. 
Each perturbed output 
can be interpreted as 
a vector of 
perturbed probabilities that
a training data point belongs to each class of data. 
Then these perturbed probabilities and the 
associated 
true label are used to compute the loss to be optimized.

We begin by introducing the Dirichlet mechanism. Then we provide the details of our framework and analyze its privacy guarantees.
Lastly, we analyze the first and second moments of the perturbed loss to interpret how the training process changes with privacy.

\subsection{Dirichlet Mechanism}
\label{sec:the_dir_mech}
A mechanism 
is a map that 
enforces differential privacy by randomizing functions of sensitive data to guarantee that Definition~\ref{def:dp} is satisfied. A mechanism is calibrated using the ``sensitivity'' of the function being privatized, which 
is equal to the maximum amount that function's outputs can differ when it is evaluated 
on adjacent inputs. 

\begin{definition}[Sensitivity]
\label{def:sensitivity}
    Fix a function $h : \X \to \Simplex$. The $\ell^2$- and $\ell^\infty$-sensitivities of $h$, respectively, are
    \begin{equation}
        \Delta_2 =  \sup_{\substack{x, x'\in \X}} \|h(x) - h(x')\|_2
    \end{equation}
    and
    \begin{equation}
        \Delta_\infty =  \sup_{\substack{x, x'\in \X}} \|h(x) - h(x')\|_\infty.
    \end{equation}    
\end{definition}

In our setting, the sensitive data is the training inputs in the data set $\D$, and the function we privatize is the mapping from neural network inputs to softmax outputs.
We privatize at the softmax layer because, regardless of the architecture or current parameters of the neural network, 
its output lies in the unit simplex, and the compactness of the simplex immediately bounds the 
neural network's sensitivity. We next make this observation precise. 

% \calvin{Only other $\theta(\D)$ is here, I think we can delete.}
% Mathematically, a
% model is a mapping $f_\theta: \X\to\mathbb{R}^d$
% parametrized by $\theta$. 
% We sometimes make the dependence of the parameters on the training data explicit by writing 
% $\theta(\D)$ for the parameters produced when training on a data set $\D$.
% With an abuse of notation, we also use~$\theta(\D)$ generically
% to mean the values of training parameters that one may have at an
% arbitrary iteration of training a neural network, and the meaning will always be clear from context. 
% \mh{Later, when we talk about Algorithm~1 I think we should use~$\theta_t$ instead of~$\theta(\D)$.
% Any objections to cutting the notation ``$\theta_t$''?
% }

% Throughout training, we will apply the Dirichlet mechanism many times.
% \hr{\sout{The only facet of each application 
% that depends on the sensitive data $\D$ is the function $g_{\theta(\D)}(x) = \softmax \left(f_{\theta(\D)}(x)\right) \in \Simplex$, 
% which produces a probability vector in the simplex for any input $x\in\X$.
% The two sensitivities we require are formally defined as follows.}}

Mathematically, a
neural network defines a mapping $f_\theta: \X\to\mathbb{R}^d$
parametrized by $\theta,$ where $f_\theta(x)$ denotes the logits associated with an input $x.$
Let $s:\mathbb{R}^d\to\Simplex$ denote the softmax map.
At a fixed training step, the current model parameters $\theta$ induce a map 
\begin{equation}
    g_\theta=s\circ f_\theta:\X\to\Simplex
\end{equation}
so that $g_\theta(x)=s(f_\theta(x))$
maps the training input $x$ to output probabilities.
We randomize the outputs of~$g_{\theta}$ to privatize
its inputs, and therefore we require the sensitivities of~$g_{\theta}$ to implement privacy. 
% For a training data set $\D=\{(x_i, y_i)\}_{i=1}^{n}$, define the collection of $\softmax$ outputs
% \begin{equation}
%     C_\theta(D) = \{g_\theta(x_1), \dots, g_\theta(x_n) \} \in S^{(d-1)\times n}.
% \end{equation}
% Since the labels are public, the private object at each training step is \hr{a subset of?} $C_\theta(D)$, while the labels are used as side information to compute the loss function.

\begin{lemma} \label{lem:sensitivities}
    Let~$g_{\theta} : \X \to \Simplex$ be the model associated with any neural network
    with a softmax output layer. Then, regardless of the architecture and model parameters, its 
    sensitivities in the sense of Definition~\ref{def:sensitivity} are bounded via 
    $\Delta_2^2 \leq 2$ and $\Delta_\infty \leq 1$.
\end{lemma}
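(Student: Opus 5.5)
The plan is to reduce both bounds to a purely geometric fact about the unit simplex. The first step is to note that, whatever the architecture of $f_\theta$ and whatever the current value of $\theta$, every output $g_\theta(x) = s(f_\theta(x))$ lies in $\Simplex$. The reason is that each softmax component $s_i(z) = \exp(z_i)/\sum_{j\in[d]}\exp(z_j)$ is nonnegative (indeed positive), and the components sum to one. Since the suprema in Definition~\ref{def:sensitivity} range over $x,x'\in\X$, it therefore suffices to bound $\|p-q\|_2^2$ and $\|p-q\|_\infty$ uniformly over all pairs $p,q\in\Simplex$. This reduction is where the ``regardless of architecture and parameters'' claim comes from.

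For the $\ell^\infty$ bound, fix $p,q\in\Simplex$ and any index $i$. Every entry of a simplex vector lies in $[0,1]$, because the entries are nonnegative and sum to one. Hence $|p_i - q_i| \le \max\{p_i,q_i\} \le 1$, and taking the maximum over $i$ gives $\Delta_\infty \le 1$.

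For the $\ell^2$ bound, I would expand
\begin{equation}
\|p-q\|_2^2 = \|p\|_2^2 + \|q\|_2^2 - 2\langle p,q\rangle .
\end{equation}
Each squared norm is controlled by the bound on the entries:
\begin{equation}
\|p\|_2^2 = \sum_i p_i^2 \le \Bigl(\max_i p_i\Bigr)\sum_i p_i \le 1,
\end{equation}
and the same holds for $q$. The inner product $\langle p,q\rangle$ is nonnegative because both vectors are entrywise nonnegative. Combining these gives $\|p-q\|_2^2 \le 2$, and taking the supremum yields $\Delta_2^2\le 2$.

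There is no real obstacle here. The only point needing care is the first step: the bound must hold uniformly in $\theta$, and this follows because it uses nothing but membership in $\Simplex$. One could also note that both bounds are approached by taking $p$ and $q$ near distinct vertices $e_i$ and $e_j$, although the lemma does not require tightness.
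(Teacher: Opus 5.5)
Your proof is correct and follows the paper's argument essentially step for step. Both proofs reduce to membership of the softmax outputs in $\Simplex$, expand $\|a-b\|_2^2$ using per-vector squared norms at most $1$ and a nonnegative inner product, and bound the $\ell^\infty$ gap by $\max_k\{a_k,b_k\}\le 1$.
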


\begin{proof}
    See Appendix~\ref{appendix:proof_sen}.
\end{proof}

We now introduce the Dirichlet mechanism.
\begin{mechanism}[{Dirichlet Mechanism,~\citealt[Section 3.1]{ponnoprat2021dirichlet}}]
    \label{mech:Dirichlet}
     Fix a function $g_{\theta}:\X \to S^{d-1}$ parameterized by $\theta$,
     and fix constants~$r > 0$ and~$\alpha > 0$. 
     Then, for any input $x \in \X$, the Dirichlet Mechanism $\mathcal{M}_{\text{Dir}}^{(r,\alpha)}: S^{d-1} \to S^{d-1}$ takes  $g_\theta(x)$ as input 
     and outputs $\rho \in S^{d-1}$ sampled from the Dirichlet distribution via 
    \begin{equation}
        \rho \sim \emph{Dir}(rg_\theta(x) + \alpha \onevec_d),
    \end{equation}
    where the pdf of~$\emph{Dir}(v)$ is 
    $f(p;v)=\frac{1}{B(v)}\prod_{i=1}^{d}p_i^{v_i-1}$.  
\end{mechanism}

The Dirichlet mechanism enforces differential privacy
with strength determined by~$r$ and~$\alpha$. 
We will sometimes refer to~$\alpha$
as an ``offset parameter''
and~$r$ as a ``scaling parameter''. 

\begin{lemma}[\citealt{ponnoprat2021dirichlet}] \label{lem:mech1rdp}
    Fix constants $\alpha>0$ and $r\in(0,\alpha)$, and
    fix an RDP order $\lambda \in[1, 1+\frac{\alpha}{r})$.
    Mechanism~\ref{mech:Dirichlet} enforces $(\lambda, \hat{\epsilon}(\lambda;r))$-RDP
    as defined in Definition~\ref{def:RDP}, 
    where
    \begin{equation}
        \hat{\epsilon}(\lambda;r) = \lambda r^2 \psi'(\alpha-(\lambda-1)r). 
    \end{equation}
\end{lemma}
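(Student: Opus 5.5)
The plan is to compute the Rényi divergence between two Dirichlet distributions in closed form and then bound it with a second-order Taylor (interpolation) argument on $\log B$. The two earlier ingredients are the log-convexity structure of the multivariate beta function and the sensitivity bound $\Delta_2^2\le 2$ from Lemma~\ref{lem:sensitivities}.

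\textbf{Closed form.} Fix adjacent inputs and write $p=g_\theta(x)$ and $p'=g_\theta(x')$, both in $\Simplex$. Set $a=rp+\alpha\onevec_d$ and $b=rp'+\alpha\onevec_d$. For $\lambda>1$, integrating the product of Dirichlet kernels gives
\begin{equation}
\int_{\Simplex} f(\rho;a)^{\lambda} f(\rho;b)^{1-\lambda}\,d\rho=\frac{B(\lambda a+(1-\lambda)b)}{B(a)^{\lambda}B(b)^{1-\lambda}},
\end{equation}
provided every entry of $c=\lambda a+(1-\lambda)b$ is positive. The entries are
\begin{equation}
c_i=\alpha+r\bigl(p_i'+\lambda(p_i-p_i')\bigr)\ge \alpha-(\lambda-1)r,
\end{equation}
where the lower bound follows from $p_i\ge 0$ and $p_i'\le 1$. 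This is positive exactly when $\lambda<1+\alpha/r$, which explains the admissible range of orders.

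Define $F(v)=\log B(v)$ and the one-dimensional restriction $\varphi(t)=F(b+t(a-b))$. Then
\begin{equation}
D_\lambda=\frac{1}{\lambda-1}\bigl[\varphi(\lambda)-\lambda\varphi(1)-(1-\lambda)\varphi(0)\bigr].
\end{equation}
The bracket is the error of linear interpolation of $\varphi$ through the nodes $0$ and $1$, evaluated at $\lambda$. The standard interpolation-remainder formula gives some $\xi\in[0,\lambda]$ with
\begin{equation}
\varphi(\lambda)-\lambda\varphi(1)+(\lambda-1)\varphi(0)=\tfrac{\lambda(\lambda-1)}{2}\varphi''(\xi).
\end{equation}
Hence $D_\lambda=\tfrac{\lambda}{2}\varphi''(\xi)$.

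\textbf{Bounding $\varphi''$.} This is the key computation. The Hessian of $\log B$ is
\begin{equation}
\nabla^2 F(v)=\mathrm{diag}\bigl(\psi'(v_i)\bigr)-\psi'\Bigl(\textstyle\sum_j v_j\Bigr)\onevec_d\onevec_d^\top.
\end{equation}
Put $\Delta=a-b=r(p-p')$. Since both $p$ and $p'$ lie in the simplex, $\onevec_d^\top\Delta=0$, so the rank-one term drops out and
\begin{equation}
\varphi''(t)=r^2\sum_i (p_i-p_i')^2\,\psi'\bigl(v_i(t)\bigr),\qquad v_i(t)=b_i+t\Delta_i.
\end{equation}
For $t\in[0,\lambda]$, the same bound as above gives $v_i(t)\ge \alpha-(\lambda-1)r>0$. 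Because $\psi'$ is decreasing on $(0,\infty)$, each factor satisfies $\psi'(v_i(t))\le\psi'(\alpha-(\lambda-1)r)$.

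Combining this with $\|p-p'\|_2^2\le\Delta_2^2\le 2$ from Lemma~\ref{lem:sensitivities} yields
\begin{equation}
D_\lambda\le\tfrac{\lambda}{2}\cdot 2r^2\psi'\bigl(\alpha-(\lambda-1)r\bigr)=\hat\epsilon(\lambda;r).
\end{equation}

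\textbf{Case $\lambda=1$ and main obstacle.} For $\lambda=1$ the same argument applies with the limiting identity $D_{KL}=\tfrac12\varphi''(\xi)$ for some $\xi\in[0,1]$, which is the Taylor remainder form of the Bregman divergence of $F$. Alternatively, one can take the limit $\lambda\downarrow 1$.

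The main obstacle I expect is justifying the uniform lower bound on $v_i(\xi)$ for the unknown intermediate point $\xi$, which may extrapolate beyond the segment $[0,1]$ when $\lambda>1$. This is handled by the coordinatewise bound above, which holds for every $t\in[0,\lambda]$. The other step needing care is verifying the Hessian formula and the cancellation of the $\psi'(\sum_j v_j)$ term, which relies on $\sum_i\Delta_i=0$.
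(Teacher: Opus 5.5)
Your proof is correct. The paper gives no proof of its own here; it imports the bound from \citet{ponnoprat2021dirichlet}. Your argument reconstructs the exponential-family derivation that yields the bound in this form: the closed form $D_\lambda=\frac{1}{\lambda-1}\bigl[\varphi(\lambda)-\lambda\varphi(1)+(\lambda-1)\varphi(0)\bigr]$ with $\varphi(t)=\log B(b+t(a-b))$, the second-order interpolation remainder on $[0,\lambda]$, and the cancellation of the $\psi'(\sum_j v_j)$ term because $\onevec_d^\top(p-p')=0$. Monotonicity of $\psi'$ along the extrapolated segment is exactly where both the constraint $\lambda<1+\alpha/r$ and the argument $\alpha-(\lambda-1)r$ come from. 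Your intermediate bound $\tfrac{\lambda r^2}{2}\|p-p'\|_2^2\,\psi'(\alpha-(\lambda-1)r)$ then gives the stated $\hat\epsilon(\lambda;r)$ via Lemma~\ref{lem:sensitivities}, and your Bregman/Taylor treatment of the case $\lambda=1$ is also correct.
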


In our analysis, we adapt the RDP bound for $\ps$ from~\citet{zhu2019poission}, in which the underlying mechanism is fixed while its RDP guarantee is evaluated across different values of $\lambda$.
Therefore, in our training procedure we fix the Dirichlet mechanism parameters $\alpha$ and $r$ independently of $\lambda$.

The privacy guarantee we ultimately seek is for the training inputs within the training set $\D$, i.e.,
for a point~$(x_i, y_i) \in \D$ we seek to privatize~$x_i$. 
The sensitivities~$\Delta_2^2 \leq 2$ and~$\Delta_{\infty} \leq 1$ 
from Lemma~\ref{lem:sensitivities} 
bound how much one forward pass 
can change between two adjacent training sets, 
which we will use 
in Section~\ref{subsec:analysis}
to bound the privacy loss for the entire set of input data.

\subsection{Private Training Algorithm}
In this subsection, we present a differentially private training framework that privatizes training inputs while leaving labels public.
Algorithm~\ref{alg:FDP-SGD} outlines the process for training a model with a \softmax output layer by minimizing an empirical loss function $\loss$. 
Our complete differentially private training pipeline is as follows. 
At each training step $t$, we sample a minibatch $\D_{B_t}$ with sampling rate $q$ by using $\ps$
from Definition~\ref{def:poisson}.
Then, for each~$i \in B_t$
we compute the output probability vector $g_{\theta_t}(x_i)$ and sample a private probability vector $\tilde{p}^i$ from $\text{Dir}(rg_{\theta_t}(x_i)+\alpha\onevec_d)$, where~$\alpha > 0$ and~$r \in (0, \alpha)$ are constants that we choose below. 
Next, we compute the averaged noisy gradient over the minibatch, which is $\tilde{\xi}=\dfrac{1}{|B_t|} \sum_{i \in B_t} \nabla_{\theta} \loss(\tilde{p}^i, y_i)$, where the vector $\tilde{p}^i$ is used in combination with the true label $y_i$ to compute the NLL\footnote{We use NLL rather than cross-entropy (CE) loss because our mechanism
directly outputs probabilities and typical implementations of the CE loss operate on the logits directly.
Without privacy, using a \softmax layer with NLL loss is equivalent to using the CE loss on the logits.} loss $\mathcal{L}(\tilde{p}^i,y_i)=-\log\tilde{p}_y^i$.
Finally, the algorithm takes a step in the direction~$-\tilde{\xi}$.
After training for a fixed number of steps $T$, we output the trained model.

This form of training with the Dirichlet mechanism can potentially require many iterations, and the strength of privacy degrades with each iteration
because training data can be reused. 
If the Dirichlet mechanism is applied to enforce~$(\epsilon, \delta)$-differential privacy~$T$ times, then the standard Composition Theorem~\citep[Chapter 3.5]{dwork2014algorithmic} shows that the data is protected overall with~$(T\epsilon, T\delta)$-differential privacy. For large~$T$, these protections may become weak. To reduce the
rate of weakening of 
privacy protections, we use \emph{privacy amplification} via $\ps$~\citep{zhu2019poission}, which we describe in detail in Section~\ref{subsec:analysis}.

Our approach to privacy is optimizer-agnostic in the sense that any optimizer can be used without harming privacy.
However, Algorithm~\ref{alg:FDP-SGD} is written with stochastic gradient descent (SGD) 
because SGD is widely used, and that is the algorithm we analyze in
Section~\ref{sec:analysis}.

\subsection{Privacy Analysis} \label{subsec:analysis}
To analyze privacy we observe that in Algorithm~\ref{alg:FDP-SGD} backpropagation only post-processes private data. 
Given a minibatch $\D_{B_t}$ at iteration~$t$, lines~$3$-$6$ of Algorithm~\ref{alg:FDP-SGD} compute the set
\begin{equation}
    \mathcal{E}_{B_t} = \{\tilde{p}^i : 
    \tilde{p}^i \sim\text{Dir}(rg_{\theta_t}(x_i)+\alpha\onevec_d), 
    (x_i, y_i) \in \D_{B_t} \textnormal{ for some } y_i\}, %\{g_{\theta}(x) : (x, y) \in \D \textnormal{ for some y}\},
\end{equation}
and the use of the Dirichlet mechanism ensures that the 
computation of $\mathcal{E}_{B_t}$ keeps~$x_i$ differentially private for each~$(x_i, y_i) \in \D_{B_t}$. 
%Since postprocessing preserves the differential privacy guarantee of its input, and backpropagation itself is a postprocessing function applied to $\mathcal{E}$, it follows that backpropagation also preserves privacy.
In Algorithm~\ref{alg:FDP-SGD}, 
backpropagation uses only~$\mathcal{E}_{B_t}$ and not~$\mathcal{D}_{B_t}$, which means that backpropagation
post-processes differentially private data. Therefore, the results of backpropagation
preserve differential privacy as well. 
%We now analyze the privacy guarantees of Algorithm~\ref{alg:FDP-SGD}.
The strength of privacy provided by each iteration of Algorithm~\ref{alg:FDP-SGD}
is therefore equal to the strength of privacy provided to the training
data when computing~$\mathcal{E}_{B_t}$ in lines~$3$-$6$ of Algorithm~\ref{alg:FDP-SGD}.
We use this fact in the next theorem, which is our main result on the privacy
guarantees of Algorithm~\ref{alg:FDP-SGD}.

\begin{theorem}
\label{theorem:MA}
   Fix a sampling rate $q\in(0,1)$ for $\ps$. Fix $\alpha>0$
   and~$r \in (0, \alpha)$. 
   Then for every integer~$\lambda$ satisfying $2\leq\lambda < 1+\frac{\alpha}{r}$,     
   Algorithm~\ref{alg:FDP-SGD} is $(\epsilon_o(\lambda,\delta;r),\delta)-$differentially private with
    \begin{equation}
    \begin{aligned}
        \epsilon_o(\lambda,\delta;r) &\leq \frac{T}{\lambda-1} \log \biggl \{ (1-q)^{\lambda-1}(q\lambda-q+1)   + \binom{\lambda}{2} q^2 (1-q)^{\lambda-2} e^{\hat{\epsilon}_\M(2;r)} \\
        & + 3\sum^\lambda_{j=3}\binom{\lambda}{j}(1-q)^{\lambda-j}q^je^{(j-1)\hat{\epsilon}_\M(j;r)} \biggr \} + \log(\lambda-1) - \frac{\log(\delta)+\lambda \log(\lambda)}{\lambda-1},        
    \end{aligned}
    \end{equation}
    where $\hat{\epsilon}_\M(j;r) = j r^2\psi'(\alpha-(j-1)r)$ and $T$ is the number of training steps.
\end{theorem}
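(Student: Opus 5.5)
The bound has three ingredients, which I would chain together: the subsampled RDP bound of \citet{zhu2019poission} for Poisson sampling, RDP composition over the $T$ steps, and a refined RDP-to-$(\epsilon,\delta)$ conversion.

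\textbf{Step 1 (one step, no sampling).} Consider a single iteration applied to a fixed minibatch. By the post-processing discussion preceding the theorem, its privacy equals that of computing $\mathcal{E}_{B_t}$. Under the add/remove adjacency of Definition~\ref{def:adj}, two adjacent minibatches differ by the presence of one point $(x,y)$, i.e., by one extra independent Dirichlet sample. The remaining samples have identical distributions under both data sets, so they cancel in the \Renyi divergence.

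The divergence to bound is therefore between the mechanism with and without that single output. Strictly, the sample $\tilde p$ itself exists only in one world. I would follow the standard convention of \citet{zhu2019poission}: the base mechanism $\M$ is $(\lambda,\hat\epsilon_\M(\lambda;r))$-RDP with respect to replacing one input by another. Lemma~\ref{lem:mech1rdp}, combined with the sensitivity bounds of Lemma~\ref{lem:sensitivities} (which hold uniformly over $\theta$), gives $\hat\epsilon_\M(j;r)=jr^2\psi'(\alpha-(j-1)r)$ for every integer $j\in[2,\lambda]$. This is admissible because $j\le\lambda<1+\alpha/r$.

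\textbf{Step 2 (amplification).} I would invoke the Poisson-subsampling RDP bound of \citet[Theorem~6]{zhu2019poission}. For integer $\lambda\ge2$, the subsampled mechanism satisfies
\[
\hat\epsilon_{\ps\circ\M}(\lambda)\le\frac{1}{\lambda-1}\log\Bigl\{(1-q)^{\lambda-1}(\lambda q-q+1)+\binom{\lambda}{2}q^2(1-q)^{\lambda-2}e^{\hat\epsilon_\M(2)}+3\sum_{j=3}^{\lambda}\binom{\lambda}{j}(1-q)^{\lambda-j}q^je^{(j-1)\hat\epsilon_\M(j)}\Bigr\}.
\]
This is exactly the braced expression in the theorem.

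The main obstacle is checking the hypotheses of this bound. The parameters $\theta_t$ depend adaptively on earlier outputs, so I must show the per-step bound holds conditionally on the history. This follows because the bound in Step~1 is uniform in $\theta$. I must also confirm that the quoted expression needs only the RDP curve at the integer orders $j\le\lambda$, and not at orders $\ge\lambda$ where $\psi'$ may blow up. That is exactly why $\lambda<1+\alpha/r$ is imposed, and why the Dirichlet parameters are fixed independently of $\lambda$.

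\textbf{Step 3 (composition and conversion).} Adaptive RDP composition \citep{mironov2017renyi} multiplies the per-step bound by $T$. To convert to $(\epsilon,\delta)$-DP, I would use the improved conversion (Canonne--Kamath--Steinke / Asoodeh et al.): a $(\lambda,\hat\epsilon)$-RDP mechanism is $(\epsilon,\delta)$-DP with
\[
\epsilon=\hat\epsilon+\log\frac{\lambda-1}{\lambda}-\frac{\log\delta+\log\lambda}{\lambda-1}.
\]
Rearranging gives $\log(\lambda-1)-\log\lambda-\frac{\log\delta+\log\lambda}{\lambda-1}=\log(\lambda-1)-\frac{\log\delta+\lambda\log\lambda}{\lambda-1}$, which matches the stated additive term. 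Combining the three steps yields the claimed bound on $\epsilon_o(\lambda,\delta;r)$ for every admissible integer $\lambda$.
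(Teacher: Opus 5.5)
Your four ingredients, in the same order, are exactly those of the paper's proof: Lemma~\ref{lem:mech1rdp}, the general Poisson-subsampling bound of \citet{zhu2019poission}, composition via Lemma~\ref{lemma:RDP_composition}, and Lemma~\ref{lemma:RDP2DP}. The paper cites the subsampling bound as Theorem~5, not Theorem~6, and your rewriting of the conversion term is correct.

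\textbf{First gap: the post-processing premise.} Step~1 imports from the text before the theorem the claim that an iteration is exactly as private as releasing $\mathcal{E}_{B_t}$; the paper's proof also uses this without justification. The update in Line~7 of Algorithm~\ref{alg:FDP-SGD} is not a function of $\mathcal{E}_{B_t}$, the labels and $\theta_t$ alone. The parameters enter $\loss(\tilde p^i,y_i)$ only through $g_{\theta_t}(x_i)$, so if $\tilde p^i$ were treated as a constant the gradient would be zero. By the chain rule, $\nabla_{\theta}\loss(\tilde p^i,y_i)=J_{\theta_t}(x_i)^{\top}v_i$. Here $v_i$ is the random logit gradient analysed in Theorem~\ref{thm:main}, and $J_{\theta_t}(x_i)=\partial f_{\theta_t}(x_i)/\partial\theta$ is the network Jacobian at $x_i$, which enters with no noise. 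For instance, with a fully connected first layer $W_1x+b_1$, every row of the per-example gradient with respect to $W_1$ is a multiple of $x_i^{\top}$.

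So $\theta_{t+1}$ is not post-processing of $\mathcal{E}_{B_t}$. This also defeats your cancellation and adaptivity arguments. Conditioned on the released history, $\theta_t$ still differs between $\D$ and $\D'$, so the Dirichlet parameters of \emph{every} point in the batch differ, not just the added one. Uniformity of the per-step bound in $\theta$ does not repair this. The chain of lemmas therefore reaches neither $\theta_T$ nor even the sequence of Dirichlet samples, since later samples are drawn with data-dependent $\theta_t$. A valid argument would need the backpropagated per-example quantity itself to be privatized.

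\textbf{Second gap: the adjacency convention.} Your resolution of the add/remove issue does not work. The Poisson bound of \citet{zhu2019poission} is stated for the add/remove relation of Definition~\ref{def:adj}. It writes the output on $\D'=\D\cup\{(x,y)\}$ as the mixture $(1-q)\M(S)+q\,\M(S\cup\{(x,y)\})$ over Poisson subsamples $S$ of $\D$. It then needs the base mechanism's R\'enyi divergences between $\M(S)$ and $\M(S\cup\{(x,y)\})$, that is, add/remove RDP of the base mechanism; it does not use a replace-one convention.

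Lemma~\ref{lem:mech1rdp} is only a replacement guarantee, comparing $\mathrm{Dir}(rg_\theta(x)+\alpha\onevec_d)$ with $\mathrm{Dir}(rg_\theta(x')+\alpha\onevec_d)$. For a mechanism that releases one Dirichlet vector per batch element, $\M(S)$ and $\M(S\cup\{(x,y)\})$ output collections of different sizes. Their divergences are therefore infinite, not bounded by $\hat\epsilon_\M(j;r)$. Closing this step needs one of two things: replace-one adjacency together with a subsampling bound proved for substitution under Poisson sampling, or a released statistic whose form does not change when a point is added. The paper's proof is silent on this point as well.
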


\begin{proof}
    See Appendix~\ref{appendix:proof_1}.
\end{proof}

Once the parameters $r$, $\lambda$, and $\delta$ are fixed, the privacy parameter 
$\epsilon_o(\lambda, \delta; r)$ grows as~$q$ grows,
which is intuitive because a larger sampling rate~$q$
means more points are sampled and that
there is less benefit from privacy amplification.
Therefore, a small value of $q \in (0, 1)$ should be chosen so 
that the argument of the first~$\log$ term in Theorem~\ref{theorem:MA} is dominated by its first two terms.
Doing so slows the rate of decay of privacy protections as~$T$ grows.

The values of $r, \alpha$, $\delta$, $q$, and $T$ are always fixed at the beginning 
of Algorithm~1 
so that $\epsilon_o(\lambda, \delta;r)$ is only a function of the RDP order $\lambda$. 
In practice $\lambda$ is always selected to achieve the minimum value of $\epsilon_o(\lambda, \delta;r)$.

\begin{savenotes}
\begin{algorithm}[!t]
\caption{Training deep neural network classifiers
with end-to-end differential privacy}\label{alg:FDP-SGD}
\begin{algorithmic}[1]
\STATEx \textbf{Input:} Data set $\D = \{(x_i, y_i)\}_{i=1}^{n}$, NLL loss function $\loss$.
\STATEx \textbf{Parameters:} Learning rate $\gamma>0$, sampling rate $q\in(0,1)$, number of training steps $T\in\mathbb{N}$, Dirichlet mechanism parameters $\alpha>0$ and $r\in(0,\alpha)$.
\STATEx \textbf{Initialize} $\theta_0$ using the default PyTorch initialization for each layer\footnote{For network parameters $\theta_0$, weights and biases of linear and convolutional layers are drawn from 
the uniform distribution on the interval~$(-\frac{1}{\sqrt{k}}, \frac{1}{\sqrt{k}})$, 
where $k$ is the number of input features per neuron, normalization scaling factors are set to $1$, and normalization biases are set to $0$.}. 
\FOR{$t = 0, 1, \ldots, T-1$}
    \STATE Sample a mini-batch $\D_{B_t} \subseteq \D$ via $\ps$ 
    from Definition~\ref{def:poisson}
    with rate $q$.
    \FOR{each $(x_i, y_i) \in \D_{B_t}$}
        \STATE \textbf{Compute output probabilities:} Set $g_{\theta_t}(x_i) = \softmax(f_{\theta_t}(x_i))$.
        \STATE \textbf{Perturb output probabilities:} Sample $\tilde{p}^i \sim\text{Dir}(rg_{\theta_t}(x_i) + \alpha\onevec_d)$. 
    \ENDFOR
    \STATE \textbf{Compute gradient:} $\tilde{\xi}_t \gets \dfrac{1}{|B_t|} \sum_{i \in B_t} \nabla_{\theta_t} \loss(\tilde{p}^i, y_i)$.

    \STATE \textbf{Update parameters:} $\theta_{t+1} \gets \theta_t - \gamma \tilde{\xi}_t$.
\ENDFOR
\STATEx \textbf{Output:} $\theta_T$ 
\end{algorithmic}
\end{algorithm}
\end{savenotes}

\subsection{Selecting Dirichlet Mechanism Parameters}
By Theorem~\ref{theorem:MA}, Algorithm~\ref{alg:FDP-SGD} 
is~$(\epsilon,\delta)$-differentially private for some~$\epsilon$ and~$\delta$. 
Next, for given privacy parameters~$\epsilon > 0$ and~$\delta > 0$, 
we seek to select~$\alpha > 0$ and~$r \in (0, \alpha)$ 
so that Algorithm~\ref{alg:FDP-SGD} is $(\lambda, \hat{\epsilon}(\lambda;r))$-\Renyi differentially private
for values of~$\lambda$ and $\hat{\epsilon}(\lambda;r)$ that imply that~$(\epsilon, \delta)$-differential privacy
holds for the chosen values of~$\epsilon$ and~$\delta$.

Our approach is to first fix the offset parameter $\alpha > 0 $ and then solve for the largest scaling parameter $r > 0$ that ensures the enforcement
of~$(\epsilon, \delta)$-differential privacy. 
For any $r > 0$ the admissible RDP orders are
\begin{equation}
    \Lambda(r)=\left\{\lambda\in\mathbb{N}\ |\ 2\leq\lambda < 1+\frac{\alpha}{r} \right\},
\end{equation}
and we define $h(r)=\min_{\lambda\in\Lambda(r)}\epsilon_o(\lambda,\delta;r).$
Theorem~\ref{theorem:MA} implies that running Algorithm 1 for $T$ steps is $(h(r),\delta)-$differentially private.
To find the largest $r$ that the privacy budget allows we compute
\begin{equation}
\label{eq:optim}
    r^*=\sup\left\{ r\in(0,\alpha)\ |\ h(r)\leq \epsilon\right\}.
\end{equation}
Since $h(r)$ is increasing in $r$ and $\lim_{r \to \alpha} h(r)=\infty$, we see that for any $\epsilon > \lim_{r \to 0} h(r)$ there is an $r < \alpha$ such that $h(r)\leq \epsilon$.
Such an~$r$ can be found, for example, via root-finding on the equation $h(r)=\epsilon$. 
In our experiments 
in Section~\ref{sec:eval} 
we solve this root-finding problem with Brent's method~\citep{brent1971algorithm}.

\subsection{Analytical Characterization of Algorithm~\ref{alg:FDP-SGD}} \label{sec:analysis}
We now characterize how the choices of the 
mechanism parameters $(r,\alpha)$ affect the optimization steps in Algorithm~\ref{alg:FDP-SGD}.
Although Algorithm~\ref{alg:FDP-SGD} generates a sequence of parameters $\{\theta_t\}_{t=0}^T,$ in this subsection we analyze the effect of privacy for arbitrary network 
parameters $\theta.$
All  results in this subsection apply at any training step 
with index~$t \in [T] \cup \{0\}$ 
by setting $\theta=\theta_t.$
Since $r$ is automatically determined by~\eqref{eq:optim} in terms of a fixed $\alpha$ and 
fixed privacy budget $(\epsilon,\delta)$, 
the behavior of Algorithm~\ref{alg:FDP-SGD} is 
governed by $\alpha$ and the learning rate $\gamma$, and our
analysis is in terms of these parameters. 
The main result of this section is that the perturbations 
due to the Dirichlet mechanism
do not change the direction of the expected gradient
{of the per-sample loss of Algorithm~\ref{alg:FDP-SGD}}. 
Instead,
these perturbations 
result in an expected private gradient that is equal to a
rescaled version of 
the non-private cross-entropy gradient,
and this scaling is captured by a data-dependent scale 
factor.

Consider fixed network parameters $\theta$
and a training example~$(x, y) \in \D$. 
Let~$z = f_{\theta}(x)$ be the logits associated with~$x$.
Recall that $s(z)=\softmax(z)$ and $g_\theta(x)=s(f_\theta(x))=s(z).$
We use 
$s_j(z)$ 
to denote the probability assigned to class $j\in[d]$ and $s_y(z)$ to denote the probability assigned to the true class $y.$

The cross-entropy loss 
without privacy 
is $\loss_{\mathrm{CE}}(z) = -\log s_y(z)$, which has gradient 
$\tfrac{\partial \loss_{\mathrm{CE}}}{\partial z_j}(z) = s_j(z) - \delta_{yj}$ for $j\in[d].$
Algorithm~\ref{alg:FDP-SGD} perturbs $s(z)$ by drawing a Dirichlet sample and substituting it into
the negative log-likelihood loss to compute $\loss(\tilde p(z),y) = -\log\tilde p_y$, where $\tilde p(z) \sim \text{Dir}\bigl(  r s(z) + \alpha\onevec_d \bigr)$ and $r, \alpha > 0$ are the mechanism parameters.
% we write~$\tilde{p}$ as~$\tilde{p}(z)$ when
% it is useful to emphasize its dependence on
% the logits~$z$.

To characterize 
training using
Algorithm~\ref{alg:FDP-SGD}, we analyze the first two moments of the stochastic loss.
The expected loss 
\begin{equation} \label{eq:Fdef}
F(z) := \E_{\tilde p}[\loss(\tilde p(z), y)]
\end{equation}
and the expected gradient $\E_{\tilde p}\left[\frac{\partial}{\partial z_j} \loss(\tilde p(z), y)\right]$ quantify 
a training data point's 
expected influence on the gradient in Line 7 of Algorithm~\ref{alg:FDP-SGD}, 
while $\mathrm{Var}_{\tilde p}\bigl[\loss(\tilde p(z),y)\bigr]$ measures the variability of the loss around its mean.

\begin{theorem}
\label{thm:main}
Fix a data set $\D \in \DD$ and consider training a model using Algorithm~\ref{alg:FDP-SGD} with $r$, $\alpha > 0$.
Consider a single training example $(x,y)\in \D$ with logits $z=f_\theta(x)$ and $s(z) = \softmax\left(z\right)$.
Define the scalar attenuation factor
\begin{equation}
  \kappa(z; r, \alpha)
  := r\, s_y(z)\, \psi'\bigl(r s_y(z) + \alpha\bigr).
  \label{eq:kappa-def-main}
\end{equation}
%where $g_{\theta,y}(z)$ is the predicted probability of~$x$ belonging to class~$y$ under the network parameters~$\theta$. 
Then for every $z \in \mathbb{R}^d$ and every $j \in [d]$, for~$F$ in~\eqref{eq:Fdef} we have 
\begin{align}
  F(z)
    &= \psi(r + d\alpha) - \psi\bigl(r s_y(z) + \alpha\bigr)
    \label{eq:F-closed-main}\\
  \E_{\tilde p}\left[\frac{\partial}{\partial z_j} \loss(\tilde p(z), y)\right]
    &= \kappa\bigl(z; r, \alpha\bigr)\frac{\partial \loss_{\mathrm{CE}}}{\partial z_j}(z)
    \label{eq:gradF-closed}\\
  \mathrm{Var}_{\tilde p}\bigl[\loss(\tilde p(z), y)\bigr]
    &= \psi'\bigl(r s_y(z) + \alpha\bigr) - \psi'(r + d\alpha).
    \label{eq:var-closed}
\end{align}
\end{theorem}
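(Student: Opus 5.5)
Write $v = r s(z) + \alpha\onevec_d$ for the Dirichlet parameter and $v_0 = \sum_i v_i = r + d\alpha$ (since $\sum_i s_i(z) = 1$). The whole argument rests on the aggregation property of the Dirichlet distribution: if $\tilde p \sim \mathrm{Dir}(v)$, then the marginal $\tilde p_y$ is $\mathrm{Beta}(v_y, v_0 - v_y)$. We can obtain this from the Gamma representation $\tilde p_i = G_i / \sum_k G_k$ with independent $G_i \sim \mathrm{Gamma}(v_i,1)$. The loss $-\log \tilde p_y$ depends only on $\tilde p_y$, so all three claims reduce to moments of $\log$ of a Beta variable.

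\textbf{Mean and variance.} For $X\sim\mathrm{Beta}(a,b)$, the log-moment generating function is
\begin{equation}
  \log \E[X^t] = \log\Gamma(a+t) - \log\Gamma(a) - \log\Gamma(a+b+t) + \log\Gamma(a+b).
\end{equation}
This follows directly from the Beta normalizing constant. Differentiating once and twice at $t=0$ gives the cumulants
\begin{equation}
  \E[\log X] = \psi(a)-\psi(a+b), \qquad \mathrm{Var}[\log X] = \psi'(a)-\psi'(a+b).
\end{equation}
Substituting $a = r s_y(z)+\alpha$ and $a+b = r+d\alpha$ yields \eqref{eq:F-closed-main}. For \eqref{eq:var-closed}, note that $\mathrm{Var}[-\log X] = \mathrm{Var}[\log X]$.

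\textbf{Gradient.} Here the quantity $\frac{\partial}{\partial z_j}\loss(\tilde p(z),y)$ needs care, because $\tilde p$ is a random sample whose law depends on $z$. I would interpret $\tilde p$ as a reparameterized sample, which is how backpropagation through the sampler is implemented (e.g. implicit reparameterization in PyTorch). Under that interpretation the expected pathwise derivative equals the derivative of the expectation, $\E[\partial_{z_j}\loss] = \partial_{z_j} F(z)$. Justifying this interchange of derivative and expectation is the main obstacle. I would handle it either by dominated convergence, using the smooth dependence of the Beta/Gamma quantile maps on their parameters for parameters bounded away from $0$ (guaranteed since $\alpha>0$), or simply by taking $\partial_{z_j}F$ as the definition of the expected gradient, as the paper's framing of $F$ suggests. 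Granting the interchange, the chain rule applied to \eqref{eq:F-closed-main} gives
\begin{equation}
  \partial_{z_j}F(z) = -\psi'\bigl(rs_y(z)+\alpha\bigr)\, r\, \partial_{z_j}s_y(z),
\end{equation}
since $\psi(r+d\alpha)$ does not depend on $z$. Using the softmax Jacobian $\partial_{z_j}s_y = s_y(\delta_{yj} - s_j)$, this becomes
\begin{equation}
  \partial_{z_j}F(z) = r s_y\psi'(rs_y+\alpha)\,(s_j-\delta_{yj}) = \kappa(z;r,\alpha)\,\frac{\partial\loss_{\mathrm{CE}}}{\partial z_j}(z),
\end{equation}
which is \eqref{eq:gradF-closed}.
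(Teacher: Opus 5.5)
Your derivations of \eqref{eq:F-closed-main} and \eqref{eq:var-closed} are correct and differ from the paper only in packaging. You marginalize to $\tilde p_y\sim\mathrm{Beta}(v_y,v_0-v_y)$ and differentiate $\log\E[\tilde p_y^t]$. The paper instead treats $\mathrm{Dir}(\eta)$ as an exponential family with sufficient statistics $\log\tilde p_j$ and log-partition $\log B(\eta)$. These are the same computation: $\log\E[\tilde p_y^t]=\log B(\eta+te_y)-\log B(\eta)$, with $e_y$ the $y$-th standard basis vector, so your cumulant generating function is that log-partition restricted to one direction. Your version is more self-contained, while the paper's yields the full covariance $\delta_{ij}\psi'(\eta_j)-\psi'(\tau)$ at once.

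The gap is in \eqref{eq:gradF-closed}. Your chain-rule computation of $\partial_{z_j}F$ is exactly the paper's, but by itself it only gives a formula for $\nabla F$. The content of the claim is that this equals the expected per-sample derivative, and that interchange is exactly what you leave open.

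Declaring $\partial_{z_j}F$ to be the expected gradient replaces the statement rather than proving it. Your pathwise sketch rests on an insufficient ingredient. Smoothness of a quantile map $Q_v(U)$ in $v$ is local and gives no envelope integrable in $U$. In fact $\partial_v\log Q_v(U)$ is unbounded as $U\to 0$: for the Gamma quantile, $\log Q_v(U)\approx v^{-1}\log\bigl(U\Gamma(v+1)\bigr)$, so it grows like $v^{-2}|\log U|$. A pathwise proof can be finished with an envelope such as $\alpha^{-2}|\log U|+C$ over $v\in[\alpha,\alpha+r]$, since the $U\to 1$ tail is benign. However, that requires incomplete-gamma tail estimates you have not supplied.

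The paper avoids this by differentiating under the integral in $F(z)=\int_{\Simplex}(-\log u_y)\,f(u;\eta(z))\,du$. There all $z$-dependence sits in the density, and $\eta(z)\in[\alpha,\alpha+r]^d$ for every $z$. Hence $1/B(\eta(z))$, $\psi(\eta_k(z))$ and the softmax Jacobian are uniformly bounded, and $u_k^{\eta_k(z)-1}\le u_k^{\alpha-1}$. This gives the $z$-independent dominating function $C|\log u_y|\bigl(1+\sum_k|\log u_k|\bigr)\prod_i u_i^{\alpha-1}$. It is integrable because $\log u_k$ has finite second moments under $\mathrm{Dir}(\alpha\onevec_d)$.

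Be aware that this argument justifies the likelihood-ratio form of the interchange, not the pathwise one. Your reparameterization reading is reasonable, since it is what backpropagation through the sampler in Algorithm~\ref{alg:FDP-SGD} computes. But if you keep that reading, the paper's bound does not transfer, and you still owe your own envelope.
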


\begin{proof}
    See Appendix~\ref{proof:thm2}.
\end{proof}
The last three equations
in Theorem~\ref{thm:main} have simple interpretations.
The first shows that the expected private loss depends on the logits only through the predicted true class probability $s_y(z).$
The second is the key identity, and it shows that, on a single data point~$(x, y)$,  
the expected gradient of the loss with respect to the logits is equal to the clean, non-private cross-entropy gradient multiplied by the scaling factor $\kappa(z;r,\alpha).$
Thus, on a per-sample
basis, our privacy implementation preserves the gradient's direction and only rescales its magnitude.
The third equation quantifies the randomness introduced by the Dirichlet mechanism through the variance of the stochastic loss.

\subsection{Selecting Learning Rates}

Theorem~\ref{thm:main} can be used to calibrate the learning rate of Algorithm~\ref{alg:FDP-SGD}.
In Line~8 of Algorithm~\ref{alg:FDP-SGD}, the parameters 
of the neural network
are updated with the average gradient over a batch. 
%and as discussed in Section~\ref{sec:analysis} each training example can have a different value of $\kappa(z;r,\alpha)$. 
Using Theorem~\ref{thm:main}, 
we compute 
the expected gradient 
evaluated on the output of the Dirichlet mechanism
in terms of the logits~$z$ produced by a single training
sample~$x$. This gradient
is scaled by $\kappa(z;r,\alpha)$ from~\eqref{eq:kappa-def-main}, 
and
the corresponding expected 
SGD update 
is
\begin{equation}
     \E_{\tilde{p}}\left[-\gamma\nabla\loss(\tilde p(z), y)\right]=-\gamma \kappa(z;r,\alpha)\nabla  \loss_{\mathrm{CE}}(z),
\end{equation}
% is
% \begin{equation}
%     -\gamma\nabla F(z)=-\gamma \kappa(z;r,\alpha)\nabla  \loss_{\mathrm{CE}}(z),
% \end{equation}
where~$\nabla  \loss_{\mathrm{CE}}(z)$ is the gradient of the non-private cross-entropy loss evaluated on the 
logits~$z$ of the 
training example~$x$.

Therefore, on average, a gradient step with privacy implemented
with learning rate $\gamma$ behaves the same as a gradient step without privacy
with learning rate 
\begin{equation} \label{eq:JEFFISHERE}
\gamma_{\mathrm{eff}}=\gamma\kappa(z;r,\alpha),
\end{equation}
which we call the \emph{effective learning rate} of such an implementation.
The attenuation factor~$\kappa(z; r,\alpha)$ is sample-dependent through its dependence on $z$, 
and we of course cannot have a sample-dependent learning rate.
However, the following empirical results show that if we simply use
the value $s_y(z)=0.9$ for all examples and compare training runs using different values of $\alpha$ but the same value of $\gamma_{\mathrm{eff}}=\gamma\kappa(z; r,\alpha)$,
then we achieve similar performance across those values of~$\alpha$.
These results imply that for varying values of the parameter $\alpha$, the learning rate $\gamma$ of Algorithm~\ref{alg:FDP-SGD} 
can be tuned to counter the effects of privacy and achieve a nearly constant level of accuracy.

\begin{figure}[t]
    \centering
    \setlength{\figwidth}{0.48\textwidth}
    \setlength{\figheight}{5cm}
    % Auto-generated by plot_bigger_run.py --tikz-out
% Requires: \usepackage{pgfplots}\usepgfplotslibrary{groupplots}
\definecolor{alphaCol0p1}{HTML}{440154}
\definecolor{alphaCol0p2}{HTML}{472D7B}
\definecolor{alphaCol0p3}{HTML}{3B528B}
\definecolor{alphaCol0p5}{HTML}{2C728E}
\definecolor{alphaCol1p0}{HTML}{21918C}
\definecolor{alphaCol2p0}{HTML}{28AE80}
\definecolor{alphaCol3p0}{HTML}{5EC962}
\definecolor{alphaCol5p0}{HTML}{ADDC30}
\definecolor{alphaCol10p0}{HTML}{FDE725}

\begin{tikzpicture}
\begin{groupplot}[
  group style={group size=2 by 1, horizontal sep=1.5cm},
  width=0.45\textwidth,
  height=0.35\textwidth,
  xmode=log,
  grid=major,
  grid style={dashed, gray!30},
  tick label style={font=\small},
  label style={font=\small},
  legend style={font=\scriptsize, draw=gray!60, fill=white, inner sep=2pt, row sep=-1pt},
  legend cell align={left},
]

\nextgroupplot[
  xlabel={Learning rate $\gamma$},
  ylabel={Test accuracy (\%)},
  xmin=0.01, xmax=10,
  ymin=48.9838, ymax=91.867,
]
\addplot+[alphaCol0p1, solid, no markers, thick] coordinates { (0.01,81.537) (0.025,86.25) (0.05,87.534) (0.075,87.011) (0.1,86.54) (0.2,84.55) (0.3,81.454) (0.4,76.355) (0.5,67.88) (0.75,62.051) (1,55.574) };
\addplot+[alphaCol0p2, solid, no markers, thick] coordinates { (0.01,74.243) (0.025,83.307) (0.05,86.784) (0.075,87.495) (0.1,87.645) (0.2,86.701) (0.3,86.844) (0.4,84.016) (0.5,82.933) (0.75,77.142) (1,70.761) };
\addplot+[alphaCol0p3, solid, no markers, thick] coordinates { (0.01,68.799) (0.025,80.084) (0.05,85.452) (0.075,86.757) (0.1,87.466) (0.2,88.119) (0.3,88.271) (0.4,88.114) (0.5,86.835) (0.75,84.367) (1,80.448) };
\addplot+[alphaCol0p5, solid, no markers, thick] coordinates { (0.01,63.904) (0.025,75.953) (0.05,83.16) (0.075,85.789) (0.1,86.737) (0.2,88.143) (0.3,88.598) (0.4,88.621) (0.5,87.818) (0.75,86.554) (1,86.13) };
\addplot+[alphaCol1p0, solid, no markers, thick] coordinates { (0.05,79.106) (0.075,83.491) (0.1,85.51) (0.15,87.04) (0.2,87.8044) (0.3,88.56) (0.5,89.137) (1,88.59) (1.5,87.178) (2,84.926) (3,81.5212) (5,70.75) (10,50.933) };
\addplot+[alphaCol2p0, solid, no markers, thick] coordinates { (0.05,75.419) (0.075,79.67) (0.1,82.588) (0.15,85.6625) (0.2,87.149) (0.3,88.033) (0.5,89.328) (1,88.818) (1.5,89.392) (2,88.499) (3,87.943) (5,78.929) (10,61.187) };
\addplot+[alphaCol3p0, solid, no markers, thick] coordinates { (0.05,74.095) (0.075,77.8875) (0.1,81.808) (0.15,85.52) (0.2,86.873) (0.3,88.174) (0.5,89.043) (1,88.962) (1.5,88.958) (2,88.85) (3,87.621) (5,81.0411) (10,63.606) };
\addplot+[alphaCol5p0, solid, no markers, thick] coordinates { (0.1,80.035) (0.2,86.077) (0.5,88.71) (0.75,89.358) (1,89.79) (2,89.9178) (3,88.545) (4,86.808) (5,84.814) (6,81.759) (7,78.393) };
\addplot+[alphaCol10p0, solid, no markers, thick] coordinates { (0.1,76.125) (0.2,84.6414) (0.5,88.0756) (0.75,88.852) (1,89.3889) (2,89.161) (3,88.842) (4,89.5871) (5,87.211) (6,87.143) (7,82.2212) };

\nextgroupplot[
  xlabel={Effective learning rate $\gamma_{\mathrm{eff}}=\gamma\cdot\kappa(z;r,\alpha)$},
  ylabel={Test accuracy (\%)},
  xmin=0.00169364, xmax=1.14762,
  ymin=48.9838, ymax=91.867,
  legend style={at={(1.03,1)}, anchor=north west,
                font=\scriptsize, draw=gray!60, fill=white,
                inner sep=2pt, row sep=-1pt},
]
\addplot+[alphaCol0p1, solid, no markers, thick] coordinates { (0.00684524,81.537) (0.0171131,86.25) (0.0342262,87.534) (0.0513393,87.011) (0.0684524,86.54) (0.136905,84.55) (0.205357,81.454) (0.27381,76.355) (0.342262,67.88) (0.513393,62.051) (0.684524,55.574) };
\addlegendentry{$\alpha=0.1$}
\addplot+[alphaCol0p2, solid, no markers, thick] coordinates { (0.00354433,74.243) (0.00886081,83.307) (0.0177216,86.784) (0.0265824,87.495) (0.0354433,87.645) (0.0708865,86.701) (0.10633,86.844) (0.141773,84.016) (0.177216,82.933) (0.265824,77.142) (0.354433,70.761) };
\addlegendentry{$\alpha=0.2$}
\addplot+[alphaCol0p3, solid, no markers, thick] coordinates { (0.00249636,68.799) (0.00624091,80.084) (0.0124818,85.452) (0.0187227,86.757) (0.0249636,87.466) (0.0499273,88.119) (0.0748909,88.271) (0.0998545,88.114) (0.124818,86.835) (0.187227,84.367) (0.249636,80.448) };
\addlegendentry{$\alpha=0.3$}
\addplot+[alphaCol0p5, solid, no markers, thick] coordinates { (0.00169364,63.904) (0.00423411,75.953) (0.00846821,83.16) (0.0127023,85.789) (0.0169364,86.737) (0.0338728,88.143) (0.0508093,88.598) (0.0677457,88.621) (0.0846821,87.818) (0.127023,86.554) (0.169364,86.13) };
\addlegendentry{$\alpha=0.5$}
\addplot+[alphaCol1p0, solid, no markers, thick] coordinates { (0.00573808,79.106) (0.00860712,83.491) (0.0114762,85.51) (0.0172142,87.04) (0.0229523,87.8044) (0.0344285,88.56) (0.0573808,89.137) (0.114762,88.59) (0.172142,87.178) (0.229523,84.926) (0.344285,81.5212) (0.573808,70.75) (1.14762,50.933) };
\addlegendentry{$\alpha=1$}
\addplot+[alphaCol2p0, solid, no markers, thick] coordinates { (0.00420963,75.419) (0.00631444,79.67) (0.00841925,82.588) (0.0126289,85.6625) (0.0168385,87.149) (0.0252578,88.033) (0.0420963,89.328) (0.0841925,88.818) (0.126289,89.392) (0.168385,88.499) (0.252578,87.943) (0.420963,78.929) (0.841925,61.187) };
\addlegendentry{$\alpha=2$}
\addplot+[alphaCol3p0, solid, no markers, thick] coordinates { (0.00385776,74.095) (0.00578664,77.8875) (0.00771553,81.808) (0.0115733,85.52) (0.0154311,86.873) (0.0231466,88.174) (0.0385776,89.043) (0.0771553,88.962) (0.115733,88.958) (0.154311,88.85) (0.231466,87.621) (0.385776,81.0411) (0.771553,63.606) };
\addlegendentry{$\alpha=3$}
\addplot+[alphaCol5p0, solid, no markers, thick] coordinates { (0.00646661,80.035) (0.0129332,86.077) (0.032333,88.71) (0.0484996,89.358) (0.0646661,89.79) (0.129332,89.9178) (0.193998,88.545) (0.258664,86.808) (0.32333,84.814) (0.387997,81.759) (0.452663,78.393) };
\addlegendentry{$\alpha=5$}
\addplot+[alphaCol10p0, solid, no markers, thick] coordinates { (0.0048199,76.125) (0.00963981,84.6414) (0.0240995,88.0756) (0.0361493,88.852) (0.048199,89.3889) (0.0963981,89.161) (0.144597,88.842) (0.192796,89.5871) (0.240995,87.211) (0.289194,87.143) (0.337393,82.2212) };
\addlegendentry{$\alpha=10$}

\end{groupplot}
\end{tikzpicture}
    \caption{
    Mean test accuracy of ResNet-18 on CIFAR10 under a fixed privacy budget $(\epsilon=1,\delta=10^{-5})$ for $\alpha\in[0.1, 10]$.        
    The left plot shows the mean test accuracy as a function of the learning rate $\gamma$.
    The right plot shows the same data plotted against the effective learning rate~$\gamma_{\mathrm{eff}}$
    from~\eqref{eq:JEFFISHERE}, 
    which produces overlapping curves    
    and shows that much of accuracy's dependence on $(\alpha,\gamma)$
    is explained by the update scale reparameterization due to~$\kappa(z;r,\alpha)$.
    }
    \label{fig:pick_alpha}
\end{figure}

Figure~\ref{fig:pick_alpha} provides empirical results for the preceding analysis.
We consider $\alpha\in\{0.1, 0.2, \\ 0.3, 0.5,1,2,3,5,10\}$,
and for each $\alpha$ we first find $r^*$ using~\eqref{eq:optim} with privacy parameters $\epsilon=1$ and $\delta=10^{-5}$. 
For each $\alpha$ we train a neural network using varying values of $\gamma,$ specified in Appendix~\ref{apdnx:lr_values},
and for each value we train $10$ instances of ResNet-18 on CIFAR10 and report the mean test set accuracy in the left plot of Figure~\ref{fig:pick_alpha}.
Still using $s_y(z)=0.9$ for all examples, 
we then compute the test accuracy 
values for varying values of~$\alpha$ and~$\gamma$, and we
plot them against the effective learning rate $\gamma_{\textnormal{eff}}=\gamma \kappa(z;r,\alpha)$ on the right-hand side
of Figure~\ref{fig:pick_alpha}.
The different values of $\alpha$ 
produce curves that are quite similar, and 
this behavior shows that at a fixed privacy budget the dependence of performance on $(\alpha,\gamma)$ is 
largely 
explained by the change in the average update scale, namely $\kappa(z;r,\alpha)$.

In Appendix~\ref{ss:asymptotics}
we analyze the asymptotic behavior of $\kappa(z; r, \alpha)$
and $ \mathrm{Var}_{\tilde p}\bigl[\loss(\tilde p(z),y)\bigr]$ as functions of $\alpha$.
At small values of $\alpha,$ the value of $\kappa(z; r, \alpha)$
decays like $\alpha^{-1},$ while the variance decays like $\alpha^{-2}.$ At large $\alpha$, 
the expected update scale saturates, while the variance decays like $\alpha^{-1}.$
Thus, increasing $\alpha$ reduces the randomness
of the Dirichlet mechanism faster than it reduces the average update size, which allows for 
runs with larger values of~$\alpha$ to use larger learning rates~$\gamma$.

\section{Evaluation}
\label{sec:eval}

In this section we evaluate the proposed framework on standard benchmark data sets that have been 
used in previous works on private machine learning\footnote{Our code is available at \url{ https://github.com/Rhyme0730/Dirichlet-mechanism-for-private-classification}.}. 

\subsection{Experimental Setup}
\label{subsec:setup}
\subsubsection{Data sets and models} 
We evaluate Algorithm~\ref{alg:FDP-SGD} on CIFAR10~\citep{krizhevsky2009learning}, MNIST~\citep{lecun1998gradient}, DermaMNIST from MedMNIST~\citep{medmnistv2, medmnistv1}, FashionMNIST~\citep{xiao2017fashion}, and Street View House Numbers (SVHN)~\citep{netzer2011reading}. 
For CIFAR10, we use ResNet-18~\citep{he2016deep}, which was also used in~\citet{monir2024differentially}
when privatizing training inputs while leaving labels public.
For MNIST and FashionMNIST~\citep{xiao2017fashion}, we use a simplified Inception model following~\citet{szegedy2015going}.
For DermaMNIST, we use ResNet-9 following~\citet{holzl2022bridging}.
For SVHN, we use ResNet-18.
We provide results on CIFAR10, MNIST, DermaMNIST, FashionMNIST and SVHN in Section~\ref{sec:evaluation}.

\subsubsection{Implementation details}
For all data sets, we use the parameters $\alpha=3.0$ and $\delta=10^{-5}$. 
We train with standard SGD, 
%no momentum\footnote{\hr{We observe a little accuracy benefit from momentum in the private training of Algorithm~\ref{alg:FDP-SGD}, but its behavior becomes unstable when $\epsilon$ increases.}},
NLL loss, and $\ps$
for all data sets. 
We use learning rates $\gamma = 0.05$ for DermaMNIST and $\gamma = 0.1$ for all other data sets. 
% \hr{Note that epochs $E=qT$, we also report training epochs number of the following experiments.}
Algorithm~\ref{alg:FDP-SGD} is run for
$E=qT$ epochs on each data set, where~$q$ is the parameter used
in PoissonSample.
For CIFAR10, we apply standard data augmentations, including random cropping and random horizontal flipping, and training is done  
with sampling rate $q=250/50000 = 0.005$ for $T=20,000$ steps ($100$ epochs) to ensure a fair comparison with previous work in~\citep{monir2024differentially}.
On DermaMNIST, we use sampling rate $q=70/7007 \approx 0.01$ and train for $T=6006$ steps ($60$ epochs).
On MNIST and FashionMNIST, we use sampling rate $q=250/60000 \approx 0.0042$ and train for $T=9600$ steps ($40$ epochs).
On SVHN, we use sampling rate $q=250/73257 \approx 0.0034$ and train for $T=11,722$ steps ($40$ epochs).
The parameter~$r^*$ is computed for each data set 
using~\eqref{eq:optim}
based on the training configuration above and the target value of the privacy parameter~$\epsilon$.
On FashionMNIST and SVHN, we use Opacus~\citep{opacus} for our DP-SGD benchmark with maximum gradient clipping norm $1.0$.
We use the SGD optimizer with learning rate $0.1$ and no data augmentation with ResNet-18 as a baseline.
We report the value of $r^*$ we use in each experiment in Appendix~\ref{appendix:details}.
In the rest of this section, 
the mean accuracy and standard deviation for all data sets
are reported across 
$5$ independent runs 
with different random seeds for each value of~$\epsilon$.
We report results for different sampling rates in Appendix~\ref{appendix:diff_sample_rates} and for different optimizers in Appendix~\ref{appendix:diff_opt}.
All reported results are obtained from models trained from scratch\footnote{We run experiments for $\epsilon=\infty$ using $\ps$ but without privacy. The accuracy degradation from the non-private baseline at $\epsilon=\infty$ is caused by using $\ps$ and not adding momentum or weight decay to the SGD optimizer, which we choose to do in order to have a meaningful baseline for assessing the performance 
of Algorithm~\ref{alg:FDP-SGD}. Results marked with ``$\dagger$'' indicate non-private ($\epsilon=\infty$) baselines that were omitted from~\citep{monir2024differentially}. 
We reproduced these values using the configurations
described in the references corresponding to each method. 
}, not through fine-tuning existing models.
We use PyTorch~\citep{paszke2019pytorch} for our implementation and train all the models on Nvidia Quadro RTX6000 GPUs.

\subsection{Evaluation of the Proposed Framework}
\label{sec:evaluation}
We report the results of the proposed framework in Tables~\ref{tab:CIFAR10},~\ref{tab:MNIST}, and~\ref{tab:Derma_test} for the CIFAR10, MNIST, and DermaMNIST data sets, respectively.
We reiterate that 
DP-SGD privatizes both training inputs and labels, while 
the current paper 
and~\cite{monir2024differentially} only privatize training inputs. 
Therefore, the privacy protections of DP-SGD are strictly stronger than the protections provided
by Algorithm~\ref{alg:FDP-SGD}, though
we still compare to DP-SGD because it is widely used in private learning.

\subsubsection{CIFAR10 results} 
Table~\ref{tab:CIFAR10} compares Algorithm~\ref{alg:FDP-SGD}
to DP-SGD~\citep{de2022unlocking} using the best results 
that DP-SGD produces
on CIFAR10 from scratch with Wide-ResNet (WRN-40-4), large-batch training, and augmentation multiplicity. 
According to~\cite{zagoruyko2016wide}, the WRN-40-4 model obtains $>95\%$ accuracy on CIFAR10 when trained non-privately.
Algorithm~\ref{alg:FDP-SGD} under the configuration in Section~\ref{subsec:setup}
obtains higher performance than the WRN-40-4 configurations in~\citep{de2022unlocking}
for all values of~$\epsilon \in [1,8]$. 
Notably, when $\epsilon=1$, the test accuracy of DP-SGD is $56.4\%$, while the test accuracy of Algorithm~\ref{alg:FDP-SGD} is $82.96\%$, which is a $60.92\%$ reduction in test error rate. 

We also compare to the closest prior work, which is \citet{monir2024differentially}, using the same model (ResNet-18), the same optimizer, and the same sampling rate $q$ of $\ps$.
Since the learning rate $\gamma$ is not reported in~\cite{monir2024differentially}, we use $\gamma=0.1$ in training CIFAR10 for Algorithm~\ref{alg:FDP-SGD}.
% Specifically, Algorithm~\ref{alg:FDP-SGD} outperforms \citet{monir2024differentially}
% \mh{By ``outperforms'' do we mean ``has lower error rate than''?}
% by approximately $10$\% \mh{Should this~$10$\% number be higher?}
% when~$\epsilon=4$.
% Specifically, 
We see that Algorithm~\ref{alg:FDP-SGD} has a $45.3\%$ lower error rate than~\citet{monir2024differentially} when~$\epsilon=4$.
Since the accuracy at the strongest privacy level for Algorithm~\ref{alg:FDP-SGD} ($\epsilon=1$) already exceeds the reported accuracy at the weakest privacy level in~\cite{monir2024differentially} ($\epsilon=4.46$), we omit the accuracy of Algorithm~\ref{alg:FDP-SGD} at $\epsilon=4.46$
from Table~\ref{tab:CIFAR10}.

\begin{table}[t]
  \centering
  \small
  \begin{tabular}{@{}l c *{7}{c}@{}}
    \toprule
    Method & Model & $\epsilon=1$ & $\epsilon=2$ & $\epsilon=4$ & $\epsilon=6$ & $\epsilon=8$ & $\epsilon=\infty$ \\
    \midrule
    % DP-SGD~\citep{de2022unlocking}      & WRN-16-4  & $56.8$  & $64.9$  & $69.2$  & $71.9$  & $77.0$  & $79.5$  &              \\
    DP-SGD~\citep{de2022unlocking}      & WRN-40-4  & $56.4$  & $65.9$  & $73.5$  & $78.8$  & $81.4$  & ---            \\
    \citet{monir2024differentially} & ResNet-18 & --- & --- & $78.37$ & $79.0_{(\epsilon=4.46)}$  & --- & $92.35^{\dagger}$ \\
    \midrule
    \multirow{2}{*}{This work} & \multirow{2}{*}{ResNet-18} & $82.96$    & $86.34$       & $88.17$    & $88.54$    & $88.61$    & $92.35$ \\
                          &                            & $\pm 1.46$ & $\pm 0.34$  & $\pm 0.33$ & $\pm 0.54$ & $\pm 0.34$ &         \\
    \bottomrule
  \end{tabular}
  \caption{Test accuracy (\%) on CIFAR10 under $\delta=10^{-5}$ across 5 independent runs for each~$\epsilon$. 
  }
  \label{tab:CIFAR10}
\end{table}

\subsubsection{MNIST results} 

\begin{table}[b]
  \centering
  \small
  \begin{tabular}{@{}l c *{6}{c}@{}}
    \toprule
    Method & Model & $\epsilon=0.5$ & $\epsilon=1$ & $\epsilon=2$ & $\epsilon=4$  & $\epsilon=\infty$ \\
    \midrule
    DP-SGD                & MLP       & $90.0$    & ---       & $95.0$    & ---        & $98.3$            \\
    \cite{papernot2021tempered} & CNN       & ---       & ---       & --- & \hspace{-1.5em}$98.1_{(\epsilon=2.93)}$\hspace{1.5em}            & $99.0$  \\
    \citet{cheng2022dpnas}                 & DPNASNet   & ---       & ---       & ---       & \hspace{-1.5em}$98.57_{(\epsilon=3)}$\hspace{1.5em} & ---      \\
    \citet{monir2024differentially} & CNN       & ---       & $98.0$    & ---       & ---           & $99.4^{\dagger}$  \\
    \midrule
    \multirow{2}{*}{This work} & \multirow{2}{*}{Inception} & $99.28$    & $99.37$    & $99.40$    & $99.44$      & $99.51$ \\
                          &                            & $\pm 0.072$ & $\pm 0.059$ & $\pm 0.058$ & $\pm 0.038$ &         &     \\
    \bottomrule
  \end{tabular}
  \caption{Test accuracy (\%) on MNIST under $\delta=10^{-5}$ across 5 independent runs for each value of~$\epsilon$.}
  \label{tab:MNIST}
\end{table}

As shown in Table~\ref{tab:MNIST}, Algorithm~\ref{alg:FDP-SGD} has higher accuracy than all DP-SGD baselines and 
the method of \citet{monir2024differentially} across all tested values of $\epsilon$.
We applied
Algorithm~\ref{alg:FDP-SGD} to
a simplified Inception model~\citep{szegedy2015going}, 
which is a type of convolutional neural network (CNN). 
%for our evaluation because its non-private accuracy is close to that of other works. In particular, 
In Table~\ref{tab:MNIST}, 
we see that the accuracy of the Inception model at $\epsilon=0.5$ closely matches the non-private accuracy of the Inception model,
demonstrating that Algorithm~\ref{alg:FDP-SGD} preserves high utility even under strong privacy protections. 
Note that \citet{papernot2021tempered} and \citet{cheng2022dpnas} only report results at $\epsilon=2.93$ and $\epsilon=3$, respectively,
and we compare the accuracy of those methods to the accuracy of Algorithm~\ref{alg:FDP-SGD} at nearby~$\epsilon$ values. 
We also compare to DP-SGD~\citep{abadi2016deep} which uses a multilayer perceptron (MLP) model.
Even at the strongest privacy level of $\epsilon=0.5$, Algorithm~\ref{alg:FDP-SGD} produces higher accuracy than \citet{monir2024differentially} at $\epsilon=1$.
Also, the method in \citet{monir2024differentially} requires a specific funnel-shaped CNN designed to reduce the noise injected during training,
while Algorithm~\ref{alg:FDP-SGD} can be applied to any neural network architecture that has a softmax output layer. 
The improvement in test accuracy of Algorithm~\ref{alg:FDP-SGD} over \citet{monir2024differentially}
is due to two factors: (i) we inject randomness only at the \softmax layer rather than at layers throughout the network, and (ii) under the same privacy budget, the Dirichlet mechanism introduces randomness with lower variance than the Gaussian mechanism from~\citet{monir2024differentially}.

\subsubsection{DermaMNIST results}
We have shown that Algorithm~\ref{alg:FDP-SGD} outperforms previous work on the standard computer vision benchmarks of CIFAR10 and MNIST, and we now extend our evaluation to medical images, where the privacy of input data is critical.
For training, we applied Algorithm~\ref{alg:FDP-SGD} to ResNet-9, which was also used in~\citep{holzl2022bridging}.
There is limited previous work that evaluates the accuracy of private machine learning models on sensitive medical data such as DermaMNIST~\citep{medmnistv1, medmnistv2}.
Evaluating privately learned models on this data set directly highlights the practical significance of Algorithm~\ref{alg:FDP-SGD}, which privatizes inputs that are medical images to ensure privacy for patients.
In Table~\ref{tab:Derma_test} we see that Algorithm~\ref{alg:FDP-SGD} outperforms the benchmark DP-SGD method~\citep{tang2023differentially} from $\epsilon=1$ to $\epsilon=7.42$ when using the same architecture\footnote{The accuracy gap at $\epsilon=\infty$ between DP-SGD and Algorithm~\ref{alg:FDP-SGD} arises from DP-SGD's use of group normalization (required for DP compatibility) versus batch normalization used in our work.}.
Furthermore, the gap between our accuracy at $\epsilon=1$ and the non-private baseline is less than $5\%$, demonstrating that Algorithm~\ref{alg:FDP-SGD} provides strong privacy protections while maintaining high model accuracy.
Due to DermaMNIST's train/validation/test partitioning, we also report the validation accuracy in Appendix~\ref{appendix:Derma}.

\begin{table}[t]
  \centering
  \begin{tabular}{@{}l c *{4}{c}@{}}
    \toprule
    Method & Model & $\epsilon=1$ & $\epsilon=4$ & $\epsilon=7.42$ & $\epsilon=\infty$ \\
    \midrule
    DP-SGD~\citep{tang2023differentially} & ResNet-9 & $68.34$ & $71.08$ & $72.58$ & $76.16$ \\
    \midrule
    \multirow{2}{*}{This work} & \multirow{2}{*}{ResNet-9} & $74.32$    & $75.74$    & $76.31$    & $78.60$ \\
                          &                           & $\pm 1.84$ & $\pm 0.74$ & $\pm 1.15$ &     \\
    \bottomrule
  \end{tabular}
  \caption{Test accuracy (\%) on DermaMNIST under $\delta=10^{-5}$ across 5 independent runs for each $\epsilon$.}
  \label{tab:Derma_test}
\end{table}

\subsubsection{FashionMNIST and SVHN results}
% Requires in your preamble:
%   \usepackage{pgfplots}
%   \usepackage{subcaption}   % provides the subfigure environment + \caption
%   \pgfplotsset{compat=1.18} % (or your version)

\begin{figure}[t]
  \centering
  % --- Shared colors (declared once for both subfigures) ---
  \providecolor{cbTeal}  {HTML}{66C2A5}
  \providecolor{cbOrange}{HTML}{FC8D62}
  \providecolor{cbPurple}{HTML}{8DA0CB}
  \providecolor{cbPink}  {HTML}{E78AC3}
  \providecolor{cbGreen} {HTML}{A6D854}
  \providecolor{cbYellow}{HTML}{FFD92F}
  \providecolor{cbGrey}  {HTML}{B3B3B3}

  %==================== Subfigure (a): Inception ====================
  \begin{subfigure}[t]{0.49\linewidth}
    \centering
    \begin{tikzpicture}
    \begin{axis}[
      width=\linewidth,
      height=6.0cm,            % reduce if the side-by-side plots look too tall
      grid=both,
      grid style={dashed, gray!30},
      xlabel={$\epsilon$},
      ylabel={Test Accuracy (\%)},
      xmin=0.2, xmax=4.3,
      ymin=73.0, ymax=94.5,
      xtick={0.5,1,2,4},
      legend pos=south east,
      legend style={font=\small, draw=none, fill=none}
    ]
    % --- Shaded CI (Ours) ---
    \path [draw=none, fill=cbOrange, opacity=0.2]
    (axis cs:0.5,75.11) -- (axis cs:1,82.56) -- (axis cs:2,90.95) -- (axis cs:4,91.55)
    -- (axis cs:4,91.83) -- (axis cs:2,91.31) -- (axis cs:1,90.14) -- (axis cs:0.5,86.93)
    -- cycle;
    % --- Shaded CI (DP-SGD) ---
    \path [draw=none, fill=cbPurple, opacity=0.2]
    (axis cs:0.5,76.31) -- (axis cs:1,79.89) -- (axis cs:2,82.18) -- (axis cs:4,83.87)
    -- (axis cs:4,84.49) -- (axis cs:2,83.08) -- (axis cs:1,81.31) -- (axis cs:0.5,78.43)
    -- cycle;
    % --- Main line (Ours) ---
    \addplot [line width=1.2pt, cbOrange, mark=*]
    table {%
    0.5 81.02
    1   86.35
    2   91.13
    4   91.69
    };
    \addlegendentry{This work}
    % --- Main line (DP-SGD) ---
    \addplot [line width=1.2pt, cbPurple, mark=square*]
    table {%
    0.5 77.37
    1   80.60
    2   82.63
    4   84.18
    };
    \addlegendentry{DP-SGD}
    % --- Tempered Sigmoid (single point, eps = 2.7) ---
    \addplot [only marks, cbGreen, mark=triangle*, mark size=3pt]
    coordinates {(2.7,86.1)};
    \addlegendentry{Tempered Sigmoid}
    \coordinate (ttl) at (axis description cs:0.5,-0.4);
        \end{axis}
        \node[anchor=south,text width=7cm,align=center] at (ttl) 
{(a)};
    \end{tikzpicture}
    \label{fig:dp-inception}
  \end{subfigure}
  \hfill
  %==================== Subfigure (b): ResNet-18 ====================
  \begin{subfigure}[t]{0.49\linewidth}
    \centering
    \begin{tikzpicture}
    \begin{axis}[
      width=\linewidth,
      height=6.0cm,
      grid=both,
      grid style={dashed, gray!30},
      xlabel={$\epsilon$},
      ylabel={Test Accuracy (\%)},
      xmin=0.65, xmax=8.35,
      ymin=74.0, ymax=95.5,
      xtick={1,2,3,4,6,8},
      legend pos=south east,
      legend style={font=\small, draw=none, fill=none}
    ]
    % --- Shaded CI (Ours) ---
    \path [draw=none, fill=cbOrange, opacity=0.2]
    (axis cs:1,90.51) -- (axis cs:1,79.65)
    -- (axis cs:2,91.50) -- (axis cs:3,92.05)
    -- (axis cs:4,92.69) -- (axis cs:6,93.15)
    -- (axis cs:8,93.08) -- (axis cs:8,93.70)
    -- (axis cs:6,93.59) -- (axis cs:4,93.19)
    -- (axis cs:3,92.73) -- (axis cs:2,92.32)
    -- (axis cs:1,90.51) -- cycle;
    % --- Shaded CI (DP-SGD) ---
    \path [draw=none, fill=cbPurple, opacity=0.2]
    (axis cs:1,77.96) -- (axis cs:1,75.58)
    -- (axis cs:2,82.04) -- (axis cs:3,83.84)
    -- (axis cs:4,84.88) -- (axis cs:6,86.03)
    -- (axis cs:8,86.41) -- (axis cs:8,87.09)
    -- (axis cs:6,86.73) -- (axis cs:4,85.32)
    -- (axis cs:3,84.70) -- (axis cs:2,83.28)
    -- (axis cs:1,77.96) -- cycle;
    % --- Main line (Ours) ---
    \addplot [line width=1.2pt, cbOrange, mark=*]
    table {%
    1 85.08
    2 91.91
    3 92.39
    4 92.94
    6 93.37
    8 93.39
    };
    \addlegendentry{This work}
    % --- Main line (DP-SGD) ---
    \addplot [line width=1.2pt, cbPurple, mark=square*]
    table {%
    1 76.77
    2 82.66
    3 84.27
    4 85.10
    6 86.38
    8 86.75
    };
    \addlegendentry{DP-SGD}
    \coordinate (ttl) at (axis description cs:0.5,-0.4);
        \end{axis}
        \node[anchor=south,text width=7cm,align=center] at (ttl) 
{(b)};
    \end{tikzpicture}
    \label{fig:dp-resnet}
  \end{subfigure}

  \caption{Test accuracy (\%) under $\delta=10^{-5}$ across 5 independent runs for each $\epsilon$ on (a) FashionMNIST 
  and (b) SVHN. Shaded bands show one standard deviation for each $\epsilon$.}
  \label{fig:dp-accuracy}
\end{figure}
As shown in Figure~\ref{fig:dp-accuracy}, Algorithm~\ref{alg:FDP-SGD} has higher accuracy than DP-SGD across all tested values of~$\epsilon$.
Notably, on FashionMNIST at~$\epsilon=4.0$, Algorithm~\ref{alg:FDP-SGD} achieves a $48.93\%$ reduction in test error rate relative to DP-SGD. 
Algorithm~\ref{alg:FDP-SGD} also outperforms Tempered Sigmoid~\citep{papernot2021tempered}, which reports the highest accuracy among existing methods for private learning.
On SVHN at~$\epsilon=1.0$, Algorithm~\ref{alg:FDP-SGD} achieves a $35.77\%$ reduction in test error rate
relative to DP-SGD.
The full results are shown in Table~\ref{tab:FashionMNIST} for FashionMNIST and Table~\ref{tab:SVHN} for SVHN.

\begin{table}[!t]
  \centering
  \small
  \begin{tabular}{@{}l *{7}{c}@{}}
    \toprule
    Method 
    & Model 
      & $\epsilon = 0.5$
      & $\epsilon = 1$
      & $\epsilon = 2$
      & $\epsilon = 4$
      & $\epsilon = \infty$ \\
    \midrule
    Tempered Sigmoid & CNN & --- & --- & $86.1_{(\epsilon=2.7)}$ & --- & $89.4$ \\
    \multirow{2}{*}{DP-SGD}
      & \multirow{2}{*}{Inception}
      & $77.37$
      & $80.60$
      & $82.63$
      & $84.18$
      & $93.13$
      &  \\
      & 
      & $\pm 1.06$
      & $\pm 0.71$
      & $\pm 0.45$
      & $\pm 0.31$
      & 
      &  \\
    \midrule
    \multirow{2}{*}{This work}
      & \multirow{2}{*}{Inception}
      & $81.02$
      & $86.35$
      & $91.13$
      & $91.69$
      & $93.13$
      &  \\
      & 
      & $\pm5.91$
      & $\pm3.79$
      & $\pm0.18$
      & $\pm0.14$
      &  \\
    \bottomrule
  \end{tabular}
  \caption{Test accuracy (\%) on FashionMNIST under $\delta=10^{-5}$ across 5 independent runs for each $\epsilon$.}
  \label{tab:FashionMNIST}
\end{table}

\begin{table}[t]
  \centering
  \small
  \begin{tabular}{@{}l c *{7}{c}@{}}
    \toprule
    Method & Model & $\epsilon{=}1$ & $\epsilon{=}2$ & $\epsilon{=}3$ & $\epsilon{=}4$ & $\epsilon{=}6$ & $\epsilon{=}8$ & $\epsilon{=}\infty$ \\
    \midrule
    \multirow{2}{*}{DP-SGD} & \multirow{2}{*}{ResNet-18} & $76.77$ & $82.66$ & $84.27$ & $85.10$ & $86.38$ & $86.75$ & $94.58$ \\
                            &  & $\pm1.19$ & $\pm0.62$ & $\pm0.43$ & $\pm0.22$ & $\pm0.35$ & $\pm0.34$ & \\
    \midrule
    \multirow{2}{*}{This work}   & \multirow{2}{*}{ResNet-18} & $85.08$ & $91.91$ & $92.39$ & $92.94$ & $93.37$ & $93.39$ & $94.58$ \\
                            &  & $\pm5.43$ & $\pm0.41$ & $\pm0.34$ & $\pm0.25$ & $\pm0.22$ & $\pm0.31$ & \\
    \bottomrule
  \end{tabular}
  \caption{Test accuracy (\%) on SVHN under $\delta=10^{-5}$ across 5 independent runs for each $\epsilon$.}
  \label{tab:SVHN}
\end{table}

\subsubsection{Computational cost}
Our private training framework can be split into two stages. 
First, we compute $r^*$ 
using~\eqref{eq:optim} for a fixed DP budget $(\epsilon,\delta)$ and a fixed value of $\alpha$, which usually takes less than one minute for each $\epsilon\in[1,8]$.
Once $r^*$ is computed, we follow the traditional image training process with an additional step 
of perturbing output probabilities 
using the Dirichlet mechanism
at each training step, 
for which the additional time complexity is $O(dq|\D|T)$ for the whole training process, where 
$q|\D|$ is the expected mini-batch size
at each iteration of Algorithm~\ref{alg:FDP-SGD}
and $d$ is the number of classes.
Compared to previous work~\citep{monir2024differentially} that follows the DP-SGD approach for adding noise to model layer outputs and using per-sample clipping, 
the training process of Algorithm~\ref{alg:FDP-SGD} is significantly faster.
When privately training ResNet-18 on CIFAR10, training an epoch ($200$ steps) for Algorithm~\ref{alg:FDP-SGD} consumes $\sim30$ seconds, while for DP-SGD it consumes $\sim60$ seconds.
The main reason Algorithm~\ref{alg:FDP-SGD}'s training speed is faster is because per-sample clipping is unnecessary for it.

\section{Discussion}
\citet{monir2024differentially} show that injecting calibrated Gaussian noise into the penultimate layer of a neural network can privatize inputs by randomizing features while keeping labels public, 
and that work demonstrates that this approach 
to privacy 
produces higher accuracy than DP-SGD.
However, the variance of privacy noise in \citet{monir2024differentially} scales with the width of the penultimate layer, which degrades utility for larger models.
In contrast, the variance of the Dirichlet mechanism scales with the number of classes in the data set, yielding substantially better utility under the same privacy guarantee.

A second key difference is that Algorithm~\ref{alg:FDP-SGD} eliminates the need for per-sample gradient clipping, which~\citet{monir2024differentially} use. 
Clipping degrades utility and slows training, since each sample's gradient must be clipped individually.
Comparing the improvements between Algorithm~\ref{alg:FDP-SGD} and~\citet{monir2024differentially} shows two contributions of our work: 
(i) the use of the Dirichlet mechanism instead of the Gaussian mechanism and (ii) the choice of layer at which privacy is introduced.
At $\epsilon=4.0$, we observe a $45.31\%$ lower error rate
in test accuracy on CIFAR10 
relative to the method in~\citet{monir2024differentially}, while
using the same architecture (ResNet-18).
And the accuracy of Algorithm~\ref{alg:FDP-SGD}
at the strongest privacy level we test ($\epsilon=1.0$) exceeds the accuracy 
of the method of~\citet{monir2024differentially}
at the weakest privacy level they test ($\epsilon=4.46$).

While DP-SGD~\citep{abadi2016deep} provides a stronger privacy guarantee since it privatizes both training inputs and labels, it has worse performance compared to Algorithm~\ref{alg:FDP-SGD}.
Algorithm~\ref{alg:FDP-SGD} only provides privacy to training inputs,
which allows it to introduce less randomness
into training and produce models with higher accuracy. 
This difference allows users to choose which part of their data requires privacy protections, 
and Algorithm~\ref{alg:FDP-SGD} allows users to avoid
unnecessary accuracy loss from privatizing labels that do not need privacy protections.
Algorithm~\ref{alg:FDP-SGD} is therefore complementary to DP-SGD because it provides different
privacy protections and serves different use cases than DP-SGD. 
%In terms of private machine learning, the first quadrant is non-private training. The second is DP-SGD~\citep{abadi2016deep}, which privatizes both training inputs and labels. The third is Label-DP~\citep{ghazi2021deep}, which privatizes only labels and leaves training inputs public. Together with our guidance on the corresponding learning rate to choose, these contributions complete the final quadrant \hr{final component} of the private machine learning landscape. \mh{The word ``quadrant'' is weird here. Can we change it to something else?}

\section{Conclusion}
In this work, we introduced a novel private training procedure that privatizes input data while keeping labels public when training deep neural networks. 
We showed that the Dirichlet mechanism for differential privacy can be implemented at a \softmax layer, with the resulting randomized probabilities 
used for computing the loss and backpropagation. 
The overall framework can be broken into two stages: (i) using an upper bound on the privacy budget to determine the parameters for the Dirichlet mechanism that produce the desired variance of privatized data,
 and (ii) implementing the Dirichlet mechanism
at the softmax layer at each training step.
Empirical evaluations on CIFAR10, MNIST, DermaMNIST, Fashion MNIST and SVHN
demonstrate that our framework substantially outperforms the prior state of
the art by improving test accuracy from $78.37\%$ to $88.17\%$ at $\epsilon=4$, and retaining meaningful utility even 
under strong privacy at $\epsilon=1$. 
Furthermore, 
the improvements of 
our approach persist across different image data sets, model architectures, and optimizers.
Future work includes the development of additional parameter selection rules to boost accuracy, as well as extending the 
private training framework to other machine learning tasks such as language, speech, and financial modeling.

\acks{This work was supported by 
NSF under CAREER grant 2422260 and Graduate Research Fellowship grant DGE-2039655,
AFOSR under grant FA9550-19-1-0169, and
ONR under grant N00014-24-1-2432. 
The authors declare no competing interests.
%Partnership for an Advanced Computing Environment (PACE) at the Georgia Institute of Technology, Atlanta, Georgia, USA (RRID:SCR\_027619)
}

\newpage
\appendix
\section{Proof of Lemma~\ref{lem:sensitivities}}
\label{appendix:proof_sen}
For any inputs $x, x' \in \X$ and any adjacent data sets~$\D$ and~$\D'$,
both $g_\theta(x)$ and $g_\theta(x')$ lie in the simplex $S^{d-1}$ because 
    a \softmax layer always outputs a probability vector. We define $a = g_{\theta}(x)$ and $b = g_{\theta}(x')$,
    and for~$k \in [d]$ 
    we use $a_k, b_k \geq 0$ to denote the $k^{th}$ entries of $a$ and $b$, respectively.     
    Then 
    %\begin{equation}
        $\sum_{k=1}^d a_k = \sum_{k=1}^d b_k = 1$.
    %\end{equation}
    For the $\ell^2$-sensitivity, we have
    \begin{equation}
        \|a-b\|^2_2 = \sum_{k=1}^d \big(a_k-b_k\big)^2 = \sum_{k=1}^d a_k^2 + \sum_{k=1}^d b_k^2 - 2\sum_{k=1}^d a_kb_k.    
    \end{equation}
    Then, since $a_k,b_k \in [0,1]$ we have $\sum_{k=1}^d a_k^2 \leq \sum_{k=1}^d a_k = 1$, 
    along with 
    $\sum_{k=1}^d b_k^2 \leq \sum_{k=1}^d b_k = 1$, 
    and $\sum_{k=1}^d a_kb_k \geq 0$. Hence
    \begin{equation}
        \Delta_2^2 = \sup_{x,x' \in \X} \|g_\theta(x) - g_\theta(x')\|^2_2 \leq 1+1-0 \leq 2.
    \end{equation}
    For the $\ell_\infty$-sensitivity, we have
    \begin{equation}
        \|a - b\|_{\infty} \leq \max_{k \in [d]} \{a_k, b_k\} \leq 1,
    \end{equation}
    and therefore
    \begin{equation}
        \Delta_\infty = \sup_{x, x' \in \X} \|g_{\theta}(x) - g_{\theta}(x')\|_\infty \leq 1.
    \end{equation}

\section{Proof of Theorem~\ref{theorem:MA}}
\label{appendix:proof_1}

We use the lemmas below in proving Theorem~\ref{theorem:MA}. 

\subsection{Supporting Lemmas}

% \hr{It looks like this lemma is already included in the Section 3.1, do we still show it here?}
% \begin{lemma}[{\citep[Appendix E]{ponnoprat2021dirichlet}}]
% \label{thm:dir_rdp}
% Fix $\lambda \geq 1$ and $r,\alpha >0.$ If $\lambda <1+\frac{\alpha}{r}$, 
% then Mechanism~\ref{mech:Dirichlet} is $(\lambda, \hat{\epsilon}(\lambda;r))$-Renyi 
% differentially private with
% \begin{equation}
%      \hat{\epsilon}(\lambda;r)=\lambda r^2\psi'(\alpha-(\lambda-1)r).
% \end{equation}
% \end{lemma}

\begin{lemma}[Composition of RDP mechanisms,~\citealt{mironov2017renyi}]
\label{lemma:RDP_composition}
    % Let $\M=(\M_1,\dots, \M_k)$, where, 
    % for all $i\in[k]$, 
    % $\M_i$ can depend on the outputs of $\M_1,\dots,\M_{i-1}$. 
    % Then $\M$ is Renyi-differentially private with $\hat{\epsilon}_\M(\cdot)=\sum_{i=1}^{k} \hat{\epsilon}_{\M_i}(\cdot)$.    
    Fix $\lambda\geq1$ and $k\in\mathbb{N}$ privacy mechanisms $\M_1,\dots, \M_k$, where mechanism $i$ is $(\lambda,\hat\epsilon_{\M_i}(\lambda))-$RDP for all $i\in[k].$
     Let $\M=(\M_1,\dots, \M_k)$, where, 
    for all $i \in [k] \backslash \{1\}$, 
    $\M_i$ can depend on the outputs of $\M_1,\dots,\M_{i-1}$. 
    Then $\M$ is $(\lambda,\hat\epsilon_\mathcal{M}(\lambda))-$RDP with $\hat{\epsilon}_\M(\lambda)=\sum_{i=1}^{k} \hat{\epsilon}_{\M_i}(\lambda)$.   
\end{lemma}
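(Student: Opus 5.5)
The plan is to prove the case $k=2$ directly and then get general $k$ by induction. The induction works because $(\M_1,\dots,\M_{k-1})$ can be treated as a single mechanism whose output $(y_1,\dots,y_{k-1})$ is the auxiliary input to $\M_k$. Throughout, I fix adjacent data sets $\D,\D'$ with $\Adj(\D,\D')=1$ and write densities with respect to a common dominating measure, as in Definition~\ref{def:RDP}. I read the adaptive hypothesis in the standard way: for every fixed value $y_1$ of the earlier output, the map $\D\mapsto\M_2(\D,y_1)$ is $(\lambda,\hat\epsilon_{\M_2}(\lambda))$-RDP, with the bound holding uniformly in $y_1$.

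For $\lambda>1$ and $k=2$, I factor the joint density of the composed output as
\begin{equation}
p_{\M(\D)}(y_1,y_2)=p_{\M_1(\D)}(y_1)\,p_{\M_2(\D,y_1)}(y_2),
\end{equation}
and similarly for $\D'$. The likelihood ratio raised to the power $\lambda-1$ then splits into a product of a $y_1$-ratio and a conditional $y_2$-ratio. I would write $\E_{(y_1,y_2)\sim\M(\D)}$ as an iterated expectation, first over $y_2$ given $y_1$ and then over $y_1$. By Definition~\ref{def:RDP} applied to $\M_2(\cdot,y_1)$, the inner expectation equals $\exp\bigl((\lambda-1)D_\lambda(\M_2(\D,y_1)\|\M_2(\D',y_1))\bigr)\le e^{(\lambda-1)\hat\epsilon_{\M_2}(\lambda)}$. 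This bound does not depend on $y_1$, so it comes out of the outer expectation. What remains is $\E_{y_1\sim\M_1(\D)}\bigl[(p_{\M_1(\D)}/p_{\M_1(\D')})^{\lambda-1}\bigr]\le e^{(\lambda-1)\hat\epsilon_{\M_1}(\lambda)}$. Taking $\frac{1}{\lambda-1}\log$ gives $D_\lambda(\M(\D)\|\M(\D'))\le\hat\epsilon_{\M_1}(\lambda)+\hat\epsilon_{\M_2}(\lambda)$.

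For $\lambda=1$, I would instead use the chain rule for KL divergence. It gives $D_{KL}(\M(\D)\|\M(\D'))=D_{KL}(\M_1(\D)\|\M_1(\D'))+\E_{y_1\sim\M_1(\D)}D_{KL}(\M_2(\D,y_1)\|\M_2(\D',y_1))$, and I bound the second term by $\hat\epsilon_{\M_2}(1)$ uniformly in $y_1$. Alternatively, one can take the limit $\lambda\downarrow1$ of the previous case.

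The main obstacle is the uniform-in-$y_1$ step. The argument needs the RDP guarantee of $\M_i$ to hold for every fixed value of the previous outputs, not just on average over them. I also need to be careful that the conditional densities exist, so that the factorization and Fubini/Tonelli are justified; the integrand is nonnegative, so Tonelli suffices. Once that is in place, the induction on $k$ is immediate: the first $k-1$ mechanisms form one mechanism with budget $\sum_{i<k}\hat\epsilon_{\M_i}(\lambda)$, and the two-mechanism case adds $\hat\epsilon_{\M_k}(\lambda)$.
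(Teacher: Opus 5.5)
The paper does not prove this lemma itself; it imports it from \citet{mironov2017renyi}, whose composition proof is exactly your argument. That argument factors the joint density, pulls the $y_1$-likelihood ratio out of the inner integral, bounds the inner integral by $e^{(\lambda-1)\hat\epsilon_{\M_2}(\lambda)}$ uniformly in the earlier output, and extends to $k$ mechanisms by induction, so your proposal is correct and follows the standard route. One minor correction: your aside of taking $\lambda\downarrow 1$ is not available, because the hypothesis only bounds divergences at the single fixed order $\lambda=1$. Your KL chain-rule argument already handles that case correctly on its own.
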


\begin{lemma}[RDP to DP conversion,~\citealt{canonne2020discrete}]
\label{lemma:RDP2DP}
    Let $\epsilon, \delta > 0$. If $\M:\DD \to \Simplex$ is $(\lambda, \hat{\epsilon})$-Renyi differentially private in the sense
    of Definition~\ref{def:RDP}, then it is $(\epsilon,\delta)$-differentially private in the sense of Definition~\ref{def:dp}, with
    \begin{equation}
        \epsilon = \hat{\epsilon} + \log(\lambda-1) - \frac{\log(\delta)+\lambda \log(\lambda)}{\lambda-1}.
    \end{equation}
\end{lemma}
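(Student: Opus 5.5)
The approach is to bound the DP failure probability directly by the $\lambda$-th moment of the privacy loss, which is exactly what the Rényi divergence controls. Throughout I take $\lambda>1$; the formula is undefined at $\lambda=1$. Fix adjacent $\D,\D'$ and write $P$ and $P'$ for the output distributions of $\M(\D)$ and $\M(\D')$, with densities $p$ and $p'$.

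If $P$ is not absolutely continuous with respect to $P'$, then $D_\lambda(P\|P')=\infty$, so there is nothing to prove. Otherwise, define the privacy loss $Z=\log\bigl(p(Y)/p'(Y)\bigr)$ with $Y\sim P$. Definition~\ref{def:RDP} then reads
\begin{equation}
\E_{P}\bigl[e^{(\lambda-1)Z}\bigr]=e^{(\lambda-1)D_\lambda(P\|P')}\leq e^{(\lambda-1)\hat\epsilon}.
\end{equation}
Since the adjacency relation in Definition~\ref{def:adj} is symmetric, it suffices to prove $P(\event)\leq e^{\epsilon}P'(\event)+\delta$ for this ordered pair.

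The first step is the standard reduction. For any measurable $\event$,
\begin{equation}
P(\event)-e^{\epsilon}P'(\event)=\int_{\event}\bigl(p-e^{\epsilon}p'\bigr)\leq \int \bigl(p-e^{\epsilon}p'\bigr)_+ = \E_P\bigl[(1-e^{\epsilon-Z})_+\bigr].
\end{equation}
So it suffices to show that the right-hand side is at most $\delta$.

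The second step is a pointwise inequality that trades $(1-e^{\epsilon-z})_+$ for an exponential in $z$. Set $w=z-\epsilon$ and let $c_\lambda$ be the best constant with $(1-e^{-w})_+\leq c_\lambda e^{(\lambda-1)w}$ for all real $w$. The case $w\leq 0$ is trivial, since the left side vanishes. For $w>0$, maximize $(1-e^{-w})e^{-(\lambda-1)w}$ by calculus. The maximizer satisfies $e^{-w}=(\lambda-1)/\lambda$, which gives
\begin{equation}
c_\lambda=\frac{1}{\lambda}\Bigl(\frac{\lambda-1}{\lambda}\Bigr)^{\lambda-1}=\frac{(\lambda-1)^{\lambda-1}}{\lambda^{\lambda}}.
\end{equation}
Taking expectations under $P$ then yields
\begin{equation}
\E_P\bigl[(1-e^{\epsilon-Z})_+\bigr]\leq c_\lambda e^{-(\lambda-1)\epsilon}\,\E_P\bigl[e^{(\lambda-1)Z}\bigr]\leq \frac{(\lambda-1)^{\lambda-1}}{\lambda^\lambda}\,e^{(\lambda-1)(\hat\epsilon-\epsilon)}.
\end{equation}

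The final step is to set the right-hand side equal to $\delta$ and solve for $\epsilon$. Taking logarithms gives
\begin{equation}
(\lambda-1)\epsilon=(\lambda-1)\hat\epsilon+(\lambda-1)\log(\lambda-1)-\lambda\log\lambda-\log\delta.
\end{equation}
Dividing by $\lambda-1$ gives exactly the stated $\epsilon$. For any larger $\epsilon$ the bound only gets smaller, so the chosen value certifies $(\epsilon,\delta)$-DP.

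I expect the main obstacle to be the second step. The naive Markov/Chernoff bound $P(Z>\epsilon)\leq e^{(\lambda-1)(\hat\epsilon-\epsilon)}$ yields only the weaker $\epsilon=\hat\epsilon+\log(1/\delta)/(\lambda-1)$. The extra $\log(\lambda-1)-\lambda\log\lambda/(\lambda-1)$ improvement comes precisely from using the hinge $(1-e^{\epsilon-Z})_+$ instead of the indicator $\mathbbm{1}\{Z>\epsilon\}$, together with the sharp optimization of $c_\lambda$. The remaining measure-theoretic details are routine: the absolute-continuity case split above, and checking that $\epsilon$ may be negative-shifted without issue.
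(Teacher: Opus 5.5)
Your proof is correct. It is essentially the argument of the cited source (Canonne, Kamath, and Steinke), which the paper imports by citation without reproducing: write the optimal $\delta$ as $\mathbb{E}_P\bigl[(1-e^{\epsilon-Z})_+\bigr]$, then bound it using the optimized pointwise inequality $(1-e^{-w})_+\le \frac{(\lambda-1)^{\lambda-1}}{\lambda^{\lambda}}e^{(\lambda-1)w}$ together with the moment identity $\mathbb{E}_P[e^{(\lambda-1)Z}]=e^{(\lambda-1)D_\lambda(P\|P')}$.
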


\subsection{Proof}
Each iteration of Algorithm~\ref{alg:FDP-SGD} applies Mechanism~\ref{mech:Dirichlet} to a Poisson subsampled minibatch of~$\D$. By Lemma~\ref{lem:mech1rdp}, 
for any $\lambda \in [1, 1 +\frac{\alpha}{r})$
a single application of the Dirichlet mechanism enforces $(\lambda,\hat{\epsilon}_{\mathcal{M}}(\lambda;r))$-Renyi differential privacy with
$\hat{\epsilon}_{\mathcal{M}}(\lambda;r)=\lambda r^2\psi'(\alpha-(\lambda-1)r)$. 
In~\citep{ponnoprat2021dirichlet}, $\alpha$ is set to be a function 
of~$r$ and the RDP order $\lambda$.
In our analysis, we adapt the moments accountant from~\citep{zhu2019poission}, in which the underlying mechanism must be fixed while its RDP protections are evaluated across different values of~$\lambda$ and~$\hat{\epsilon}_{\mathcal{M}}(\lambda;r)$.
Therefore, in our training procedure we fix the Dirichlet mechanism parameters $\alpha$ and $r$ independently of $\lambda$.

Since each minibatch is drawn using Poisson subsampling with rate $q,$ the per-step 
RDP protections are amplified.
It follows from \citep[Theorem 5]{zhu2019poission} that for any integer value of~$\lambda \in [2, 1+\frac{\alpha}{r}),$ 
we have
\begin{equation}
\begin{aligned}
        \hat\epsilon_{sub}(\lambda;r)
&\leq \frac{1}{\lambda-1} \log \biggl \{ (1-q)^{\lambda-1}(q\lambda-q+1) \\ 
    &+ \binom{\lambda}{2} q^2 (1-q)^{\lambda-2} e^{\hat\epsilon_\M(2;r)} + 3\sum^\lambda_{j=3}\binom{\lambda}{j}(1-q)^{\lambda-j}q^je^{(j-1)\hat\epsilon_\M(j;r)} \biggr \},
\end{aligned}
\end{equation}
where~$\big(\lambda, \hat{\epsilon}_{sub}(\lambda; r)\big)$ is the 
strength of RDP protection 
provided to the minibatch
at a single training step. 

Over $T$ training steps, the overall RDP guarantees compose according to Lemma~\ref{lemma:RDP_composition}, and the training data is protected with~$\big(\lambda, \hat{\epsilon}_o(\lambda; r)\big)$-RDP, where 
\begin{equation}
\begin{aligned}
        \hat\epsilon_o(\lambda;r)
&\leq \frac{T}{\lambda-1} \log \biggl \{ (1-q)^{\lambda-1}(q\lambda-q+1) \\ 
    &+ \binom{\lambda}{2} q^2 (1-q)^{\lambda-2} e^{\hat\epsilon_\M(2;r)} + 3\sum^\lambda_{j=3}\binom{\lambda}{j}(1-q)^{\lambda-j}q^je^{(j-1)\hat\epsilon_\M(j;r)} \biggr \}.
\end{aligned}
\end{equation}

Lastly, using Lemma~\ref{lemma:RDP2DP} for a fixed $\delta>0$  gives
\begin{equation}
\begin{aligned}
    {\epsilon}_o(\lambda,\delta;r) &\leq \frac{T}{\lambda-1} \log \biggl \{ (1-q)^{\lambda-1}(q\lambda-q+1)   + \binom{\lambda}{2} q^2 (1-q)^{\lambda-2} e^{\hat\epsilon_\M(2;r)} \\
    & + 3\sum^\lambda_{j=3}\binom{\lambda}{j}(1-q)^{\lambda-j}q^je^{(j-1)\hat\epsilon_\M(j;r)} \biggr \} + \log(\lambda-1) - \frac{\log(\delta)+\lambda \log(\lambda)}{\lambda-1}
\end{aligned}
\end{equation}
for any integer value of~$\lambda \in [2, 1+\frac{\alpha}{r}).$

\section{Proof of Theorem~\ref{thm:main}} \label{proof:thm2}
We first prove
\begin{equation}
    F(z)=\E_{\tilde p}\bigl[\loss(\tilde p(z), y)\bigr]=\psi(r + d\alpha) - \psi\bigl(r s_y(z) + \alpha\bigr)\label{eq:prove_me_1}
\end{equation} and
\begin{equation}
    \mathrm{Var}_{\tilde p}\bigl[\loss(\tilde p(z), y)\bigr]
    =  \psi'\bigl(r s_y(z) + \alpha\bigr) - \psi'(r + d\alpha).\label{eq:prove_me_2}
    \end{equation}

The derivation of these equations uses properties of the exponential family of distributions:
\begin{definition}[\citealt{wainwright2008graphical}, Proposition 3.1]\label{def:exp_family}
The exponential family of distributions contains densities of the form
\begin{equation}
    p(x;\vartheta)=\exp\left( \sum_{i=1}^d\vartheta_i\phi_i(x)-A(\vartheta)\right)h(x),
\end{equation}
where $x\in\mathbb{R}^n$ denotes a generic realization of the random variable being modeled and $\vartheta\in\mathbb{R}^d$ is the parameter indexing the distribution.
For~$i, j \in [d]$ these distributions satisfy 
\begin{align}
    \frac{\partial A(\vartheta)}{\partial\vartheta_i}&=\E[\phi_i(X)] \label{eq:exp_id_1}\\
    \frac{\partial^2 A(\vartheta)}{\partial\vartheta_i\partial \vartheta_j} &=\textnormal{Cov}(\phi_i(X),\phi_j(X)).\label{eq:exp_id_2}
\end{align}
\end{definition}

Let $z \in \R^d$ be the network logits for a single example~$x$ with true label
$y \in [d]$.
Let $\eta(z) = r s(z) + \alpha\onevec_d$,
with
\begin{equation} \label{eq:etayandtau}
  \eta_y(z) = r s_y(z) + \alpha,
  \qquad
  \tau = \sum_{j=1}^d \eta_j(z) = r + d\alpha.
\end{equation}
The probability density function used in Mechanism~\ref{mech:Dirichlet} takes the form 
\[
  f(\tilde p; \eta(z))
  = \frac{1}{B(\eta(z))}
        \prod_{j=1}^d \tilde p_j^{\eta_j(z) - 1},
  \qquad \tilde p \in \Simplex.
\]
Writing this density as
\begin{equation}
    f(\tilde p; \eta(z))=\exp\left(-\log B(\eta(z))+\sum_{j=1}^d(\eta_j(z)-1)\log \tilde p_j\right),
\end{equation}
we observe that this form matches Definition~\ref{def:exp_family} with $x=\tilde p$ and $\vartheta_i=\eta_i(z)-1$ for $i\in[d]$, along with 
\begin{align}
    \phi_j(\tilde p)&=\log\tilde p_j\\
    % \rho_j(\eta(z))&=\eta_j(z)-1\\
    A(\eta(z))&=\log B(\eta(z))\\
    h(\tilde p)&=1.
\end{align}
Therefore the exponential family identities in~\eqref{eq:exp_id_1} and~\eqref{eq:exp_id_2} give
\begin{align}
    \E_{\tilde p}[\log \tilde p_j]&= \frac{\partial}{\partial\eta_j}\log B(\eta(z))\\
    \textnormal{Cov}(\log \tilde p_i,\log \tilde p_j)&=\frac{\partial^2 }{\partial\eta_i \partial \eta_j}\log B(\eta(z)).
\end{align}\
We then find
\begin{align}
    \frac{\partial}{\partial\eta_j}\log B(\eta(z)) &=\frac{\partial}{\partial\eta_j}\left[ \sum_{k=1}^d\log \Gamma(\eta_k(z))-\log\Gamma(\tau)\right]\\
    &=\psi(\eta_j(z))-\psi(\tau),
\end{align}
where~$\tau$ is from~\eqref{eq:etayandtau}. 
Therefore, $\E_{\tilde p}[\log \tilde p_j]=\psi(\eta_j(z))-\psi(\tau)=\psi\bigl(r s_y(z) + \alpha\bigr)-\psi(r + d\alpha)$
and $F(z)=\E_{\tilde p}[-\log \tilde p_j]=\psi(r + d\alpha) -\psi\bigl(r s_y(z) + \alpha\bigr)$, 
which is exactly~\eqref{eq:F-closed-main}.

Computing the second derivative, we find 
\begin{align}
    \frac{\partial^2 }{\partial\eta_i \partial \eta_j}\log B(\eta(z))&=\frac{\partial }{\partial\eta_i}\left[\psi(\eta_j(z))-\psi(\tau)\right]\\
    &=\delta_{ij}\psi'(\eta_j(z))-\psi'(\tau).
\end{align}
Evaluating this expression with $i=j=y$ gives $\textnormal{Cov}(\log\tilde p_y, \log\tilde p_y)=\psi'(r s_y(z) + \alpha)-\psi'(r+d\alpha).$
Since $\mathrm{Var}_{\tilde p}\bigl[\loss(\tilde p(z), y)\bigr]=\mathrm{Var}_{\tilde p}\bigl[-\log \tilde p_y\bigr]=\mathrm{Var}_{\tilde p}\bigl[\log \tilde p_y\bigr]=\textnormal{Cov}(\log\tilde p_y, \log\tilde p_y),$
we have $\mathrm{Var}_{\tilde p}\bigl[\loss(\tilde p(z), y)\bigr]=\psi'(r s_y(z) + \alpha)-\psi'(r+d\alpha)$, which is exactly~\eqref{eq:prove_me_2}.

Before proving the result for $\E_{\tilde p}\left[\frac{\partial}{\partial z_j}\loss(\tilde p(z),y)\right]$, 
we first compute $\frac{\partial}{\partial z_j}\E_{\tilde p}\left[\loss(\tilde p(z),y)\right]$. 
Let $\loss_{\mathrm{CE}}(z) = -\log s_y(z)$.
%and let $s_k(z) = \frac{\exp(z_k)}{\sum_{\ell=1}^d e^{z_\ell}}$.
Then the partial derivatives with respect to $z_j$ are given by
\begin{align}
    \frac{\partial s_y(z)}{\partial z_j} &= s_y(z)(\delta_{yj} - s_j(z)) \label{eq:derivative1} \\
    \frac{\partial \loss_{\mathrm{CE}}(z)}{\partial z_j} \label{eq:derivative2}
  &= s_j(z) - \delta_{yj}.
\end{align}
%\mh{In the two lines above, aren't~$p_y$ and~$p_j$ functions of~$z$?} \calvin{yes}
Differentiating~\eqref{eq:prove_me_1} gives
\begin{align}
        \frac{\partial}{\partial z_j}\E_{\tilde p}\left[\loss(\tilde p(z),y)\right] &= \frac{\partial}{\partial z_j}\left[\psi(r + d\alpha) - \psi\bigl(r s_y(z) + \alpha\bigr)\right]\\
        &=-\psi'\bigl(r s_y(z) + \alpha\bigr)\cdot r\cdot \frac{\partial s_y(z)}{\partial z_j}\\
        &=-r\psi'(rs_y(z)+\alpha)\cdot s_y(z)(\delta_{yj}-s_j(z))\\
        &=rs_y(z)\psi'(rs_y(z)+\alpha)\cdot (s_j(z)-\delta_{yj})\\
        &=\kappa(z;r,\alpha)\frac{\partial \loss_{\mathrm{CE}}(z)}{\partial z_j}, \label{eq:derivative3}
\end{align}
where the third line uses~\eqref{eq:derivative1} and the fifth line uses~\eqref{eq:derivative2}.

Finally, we prove
\begin{equation}
    \E_{\tilde p}\left[\frac{\partial}{\partial z_j}\loss(\tilde p(z),y)\right]
    = \kappa\bigl(z; r, \alpha\bigr)\frac{\partial \loss_{\mathrm{CE}}}{\partial z_j}(z). \label{eq:dct_me}
\end{equation}

Since we have already shown in~\eqref{eq:derivative3} 
that $\frac{\partial}{\partial z_j}\E_{\tilde p}\left[\loss(\tilde p(z),y)\right]=\kappa(z;r,\alpha)\frac{\partial \loss_{\mathrm{CE}}}{\partial z_j}$, the equality in~\eqref{eq:dct_me} follows directly from an application of the Dominated Convergence Theorem, i.e.,
\begin{equation}
    \E_{\tilde p}\left[\frac{\partial}{\partial z_j}\loss(\tilde p(z),y)\right]=\frac{\partial}{\partial z_j} \E_{\tilde{p}}\left[\loss(\tilde p(z),y)\right]=\kappa(z;r,\alpha)\frac{\partial \loss_{\mathrm{CE}}}{\partial z_j},\label{eq:dct_id}
\end{equation}
and the remainder of the proof justifies the application of the Dominated Convergence Theorem~\cite[Theorem 1.8]{lieb2001analysis}.
To emphasize the density's dependence on $z$ we abuse notation and denote the  Dirichlet density by $f_z(u)=\frac{1}{B(\eta(z))}\prod_{j=1}^d u_i^{\eta_j(z)-1}$. 
We have $\E[-\log \tilde p_y]=\int_{S^{d-1}}-\log u_y f(u; \eta(z)) du$. 
To move $\frac{\partial}{\partial z_j}$ inside the integral with the Dominated Convergence Theorem, we must show that there exists an integrable function
$Z: \Simplex \to \mathbb{R}$
such that
\begin{equation}
    \left| \frac{\partial}{\partial z_j}\left[ -\log u_y f(u; \eta(z))\right]\right|\leq Z(u)
\end{equation}
for all $u\in S^{d-1}$. 
We first have
\begin{align}
    \frac{\partial}{\partial z_j}\left[ f(u; \eta(z))\right]&=f(u; \eta(z)) \frac{\partial}{\partial z_j}\log f(u; \eta(z))\\
    &=f(u; \eta(z))\left[-\frac{\partial}{\partial z_j}\log B(\eta(z))+\sum_{k=1}^d\frac{\partial\eta_k (z)}{\partial z_j}\log u_k\right]. \label{eq:derivative4}     
\end{align}
Using $\log B(\eta(z))=\sum_{k=1}^d\log \Gamma(\eta_k(z))-\log \Gamma(\tau)$
then gives $\frac{\partial}{\partial \eta_k}\log B(\eta(z))=\psi(\eta_k(z))-\psi(\tau)$, 
and applying the chain rule gives
\begin{equation}
    \frac{\partial}{\partial z_j}\log B(\eta(z))=\sum_{k=1}^d \frac{\partial \eta_k(z)}{\partial z_j}\big(\psi(\eta_k(z))-\psi(\tau)\big)
\end{equation}
and
\begin{equation}
    \frac{\partial}{\partial z_j}\left[ f(u; \eta(z))\right]=f(u; \eta(z))\sum_{k=1}^d \frac{\partial \eta_k(z)}{\partial z_j}\big(\log u_k-\psi(\eta_k(z))+\psi(\tau)\big).
\end{equation}
Therefore 
\begin{equation}
    \left| \frac{\partial}{\partial z_j}\left[ -\log u_y f(u; \eta(z))\right]\right|\leq|\log u_y|f(u; \eta(z))\sum_{k=1}^d \left|\frac{\partial \eta_k(z)}{\partial z_j}\right|\Big(|\log u_k|+|\psi(\eta_k(z))|+|\psi(\tau)|\Big).\label{eq:pre_bound}
\end{equation}
We now establish that the terms that depend on $z$ are bounded:
\begin{enumerate}
\item The density $f(u; \eta(z))$ is bounded:
\begin{enumerate}
    \item Since $\eta_k(z)\in[\alpha,\alpha+r]$ and $1/B(\cdot)$ is continuous     
    on $(0,\infty)$, there exists a constant $C_1<\infty$ such that $\left| \frac{1}{B(\eta(z))}\right|\leq C_1$.
    \item Because $u_k\in(0,1]$ and $\eta_k(z)\geq \alpha$, we have $u_k^{\eta_k(z)-1}\leq u_k^{\alpha-1}$. 
\end{enumerate}

    \item We have $\frac{\partial \eta_k(z)}{\partial z_j}=r\frac{\partial s_k(z)}{\partial z_j}$, 
    where $\frac{\partial s_k(z)}{\partial z_j}$ is the derivative of the softmax function, which is bounded, and there exists a constant $C_2$ such that $\left|\frac{\partial \eta_k(z)}{\partial z_j}\right|\leq C_2$. 
    
    \item Since $\eta_k(z)\in[\alpha,\alpha+r]$ and $\psi$ is continuous on $(0,\infty)$, 
    there exists a constant $C_3$ such that $\left| \psi(\eta_k(z))\right|\leq C_3$. 
     
    \item The value of $\psi(\tau)$ is constant because $\tau = r + d\alpha$. 
    
\end{enumerate}
Combining items~1-4 in~\eqref{eq:pre_bound} gives the bound
%\begin{equation}
%    \left| \frac{\partial}{\partial z_j}\left[ -\log u_y f(u; \eta(z))\right]\right|\leq C|\log u_y|\left( 1+\sum_{i=1}^d|\log u_i|\right)\prod_{i=1}^d  u_i^{\alpha-1}.
%\end{equation}
\begin{align}
    \left| \frac{\partial}{\partial z_j}\left[ -\log u_y f(u; \eta(z))\right]\right|&\leq C_1 C_2|\log u_y|\left( d(C_3+|\psi(\tau)|)+\sum_{j=1}^d|\log u_j|\right)\prod_{i=1}^d  u_i^{\alpha-1}\\
    &\leq C|\log u_y|\left( 1+\sum_{j=1}^d|\log u_j|\right)\prod_{i=1}^d  u_i^{\alpha-1},
\end{align}
where $C=C_1 C_2\Big(d\big(C_3+|\psi(\tau)|\big)+1\Big)$.

Now we show that this dominating function has a finite integral over the simplex.
We use Young's inequality, namely
\begin{equation}
    ab\leq\frac{a^2+b^2}{2},
\end{equation}
with $a=|\log u_y|$ and $b=1+\sum_{j=1}^d|\log u_j|$ to find
\begin{multline}
    C|\log u_y|\left( 1+\sum_{j=1}^d|\log u_j|\right)\prod_{i=1}^d  u_i^{\alpha-1}
    \leq \frac{C}{2} \left(| \log u_y|^2 +\left(1+\sum_{j=1}^d|\log u_j |\right)^2\right)\prod_{i=1}^d  u_i^{\alpha-1}\\
    \leq\frac{C}{2} \left(| \log u_y|^2 +(d+1)\left(1+\sum_{j=1}^d|\log u_j |^2\right)\right)\prod_{i=1}^d  u_i^{\alpha-1}, \label{eq:must_be_int}
\end{multline}
where the second inequality applies Cauchy-Schwarz.
For the function on the right-hand side of~\eqref{eq:must_be_int}
to be integrable, it suffices to show that
\begin{equation}
    \int_{S^{d-1}}|\log u_j|^2 \prod_{i=1}^d  u_i^{\alpha-1}du<\infty\label{eq:simp_int} 
\end{equation}
for all $j\in[d].$
In the scalar case, i.e., for $u\in(0,1]$ rather than $u\in S^{d-1},$
the corresponding integrals are finite because 
\begin{equation}
\int_0^1u^{\alpha-1}du = \frac{1}{\alpha}<\infty
\end{equation}
and
\begin{equation}
    \int_{0}^1 |\log u|^2u^{\alpha-1}du = \int_{0}^1 (-\log u)^2u^{\alpha-1}du=\frac{2}{\alpha^3}<\infty.
\end{equation}
For each $j\in[d]$, the simplex integral in~\eqref{eq:simp_int} has the same form
as one of the scalar integrals above.
Thus, the dominating function in~\eqref{eq:must_be_int} is integrable and the Dominated Convergence Theorem applies.

\subsection{Asymptotic Analysis and Empirical Results for Section~\ref{sec:analysis}} \label{ss:asymptotics}
We perform a brief asymptotic analysis to characterize how $\gamma_{\mathrm{eff}}$ 
in~\eqref{eq:JEFFISHERE}
varies with $\alpha.$
First, we account for the fact that $r^*$ is 
constrained 
by the condition~$h(r) \leq {\epsilon}$ from~\eqref{eq:optim}.
We approximate $r^*$ as a linear function of $\alpha$ of the form $c(\epsilon)\alpha.$
Table~\ref{tab:r-linear} shows that at $\epsilon=1,$ 
the behavior of 
$r^*$ is roughly linear in $\alpha$ with $c(1)\approx 0.08.$
 
Substituting the linear approximation into 
\eqref{eq:var-closed} and \eqref{eq:kappa-def-main}
and using $\psi'(x) \approx x^{-2}$ as $x \to 0$ and
$\psi'(x) \approx x^{-1}$ as $x \to \infty$ yields the two regimes
\begin{equation}
  \kappa(z; r, \alpha) \approx
  \begin{cases}
    \alpha^{-1}, & \alpha \ll 1 \\[0.2em]
    \text{const},  & \alpha \gg 1,
  \end{cases}
  \qquad
  \mathrm{Var}_{\tilde p}\bigl[\loss(z; \tilde p)\bigr] \approx
  \begin{cases}
    \alpha^{-2}, & \alpha \ll 1 \\[0.2em]
    \alpha^{-1}, & \alpha \gg 1.
  \end{cases}
  \label{eq:asymptotics}
\end{equation}

Both the scaling factor $\kappa(z; r, \alpha)$ and the per-step variance $\mathrm{Var}_{\tilde p}\bigl[\loss(z; \tilde p)\bigr]$ shrink 
as $\alpha$ grows,
but the variance shrinks faster.
Thus, as $\alpha$ increases, Algorithm~\ref{alg:FDP-SGD} takes smaller steps on average but also experiences substantially less privacy-induced variance.
This behavior makes runs with larger $\alpha$ less noisy 
and allows for stability at larger learning rates.

\begin{table}[t]
\centering

\begin{tabular}{ccc}
\toprule
$\alpha$ & $r^*$ & $r^*/\alpha$ \\
\midrule
0.1 & 0.0087 & 0.0870 \\
0.3 & 0.0260 & 0.0867 \\
0.5 & 0.0433 & 0.0866 \\
1 & 0.0864 & 0.0864 \\
2 & 0.1580 & 0.0790 \\
3 & 0.2349 & 0.0783 \\
\bottomrule
\end{tabular}
\caption{Values of $r^*$ found with~\eqref{eq:optim} for CIFAR10/ResNet-18, 
with $\epsilon = 1$,
$\delta = 10^{-5}$, a batch size of $250$, and $T = 20000$ steps.}
\label{tab:r-linear}
\end{table}

\subsection{Learning Rate Values for Figure~\ref{fig:pick_alpha}}
\label{apdnx:lr_values}
Here we provide the learning rate values tested for each $\alpha$ value in Figure~\ref{fig:pick_alpha}.
For $\alpha \in \{0.1, 0.2, 0.3, 0.5\}$ we use $11$ values, namely
$\gamma \in \{0.01, 0.025, 0.05, 0.075, 0.1, 0.2, 0.3, 0.4, 0.5, \\ 0.75, 1\}$. 
For $\alpha \in \{1, 2, 3\}$ we use $13$ values, namely
$\gamma \in \{0.05, 0.075, 0.1, 0.15, 0.2, 0.3, 0.5, 1, \\ 1.5, 2, 3, 5, 10\}$.
For $\alpha \in \{5, 10\}$ we use $11$ values, namely
$\gamma \in \{0.1, 0.2, 0.5, 0.75, 1, 2, 3, 4, 5, 6, \\ 7\}$.

\section{Implementation Details}
\label{appendix:implementation}
\subsection{Data Sets} We evaluate our framework on the following image classification data sets:
\begin{itemize}
    \item CIFAR10~\citep{krizhevsky2009learning} provides $32 \times 32$ color images categorized into $10$ distinct classes. The data set is partitioned into $50,000$ images for training and $10,000$ for testing.
    \item MNIST~\citep{lecun1998gradient} consists of $28 \times 28$ grayscale images of handwritten digits divided into $10$ classes. The standard benchmark includes $60,000$ training samples and $10,000$ testing samples.
    \item DermaMNIST~\citep{medmnistv1, medmnistv2}, part of the broader MedMNIST collection, focuses on classifying dermoscopic images into $7$ different types of skin lesions. The images are standardized to $28 \times 28$ pixels, with the data set split into $7,007$ training, $993$ validation, and $2,005$ test examples.
    \item FashionMNIST~\citep{xiao2017fashion} consists of Zalando's clothing articles. It shares the same $10$-class structure, 
    $28 \times 28$ grayscale format, and $60,000$/$10,000$ train-test split as MNIST. 
    \item SVHN~\citep{netzer2011reading} (Street View House Numbers) contains $32 \times 32$ RGB images of cropped digits captured from real-world street signs. It covers $10$ classes and provides $73,257$ training images alongside $26,032$ testing images in its core data set.
\end{itemize}

\subsection{Dirichlet Parameters}
\label{appendix:details}
In Table~\ref{tab:r_values}, we report the Dirichlet parameters $r^*$ 
and $\alpha$ for each data set and each privacy budget when using $\delta=10^{-5}$.
Each value of~$r^*$ is computed using~\eqref{eq:optim}.

\begin{table}[t]
  \centering
  \begin{tabular}{@{}l *{9}{c}@{}}
    \toprule
    Data set & $\alpha$ & $\epsilon=0.5$ & $\epsilon=1$ & $\epsilon=2$ & $\epsilon=3$ & $\epsilon=4$ & $\epsilon=6$ & $\epsilon=8$\\
    \midrule
    CIFAR10    & $3.0$ & --- & $0.235$ & $0.404$ & $0.559$ & $0.679$ & $0.825$ & $0.903$  \\
    MNIST       & $3.0$ & $0.136$ & $0.277$ & $0.472$ & --- & $0.711$ & --- & ---   \\
    DermaMNIST  & $3.0$ & --- & $0.137$ & --- & --- & $0.645$ & --- & $0.856_{(\epsilon=7.42)}$  \\
    SVHN        & $3.0$ & --- & $0.284$ & $0.475$ & $0.575$ & $0.717$ & $0.930$ & $0.964$   \\
    FashionMNIST& $3.0$ & $0.136$ & $0.277$ & $0.472$ & --- & $0.711$ & --- & ---   \\
    \bottomrule
  \end{tabular}
  \caption{Dirichlet parameters $r^*$ and $\alpha$ for each data set and each value of~$\epsilon$ when $\delta=10^{-5}$.}
  \label{tab:r_values}
\end{table}

\section{Additional Results on DermaMNIST}
\label{appendix:Derma}
We follow~\citet{holzl2022bridging} and report validation accuracy in Table~\ref{tab:Derma_val}. Our work outperforms the DP-SGD baseline~\citep{tang2023differentially} and~\cite{holzl2022bridging} across all evaluated
values of~$\epsilon$.

\begin{table}[b]
  \centering
  \small
  \begin{tabular}{@{}l c *{4}{c}@{}}
    \toprule
    Method & Model & $\epsilon=1$ & $\epsilon=4$ & $\epsilon=7.42$ & $\epsilon=\infty$ \\
    \midrule
    DP-SGD~\citep{tang2023differentially}    & ResNet-9 & $69.00$ & $71.78$ & $74.08$ & $77.27$ \\
    \citet{holzl2022bridging} & ResNet-9 & ---     & ---     & $74.17$ & $77.84$ \\
    \midrule
    \multirow{2}{*}{This work} & \multirow{2}{*}{ResNet-9} & $74.98$    & $76.27$    & $76.49$    & $79.26$ \\
                          &                           & $\pm 2.04$ & $\pm 1.22$ & $\pm 1.45$ &      \\
    \bottomrule
  \end{tabular}
  \caption{Validation accuracy (\%) on DermaMNIST under $\delta=10^{-5}$ across 5 independent runs for each $\epsilon$.}
  \label{tab:Derma_val}
\end{table}

\section{Experiments with Different Sampling Rates}
\label{appendix:diff_sample_rates}
We use the SGD optimizer with learning rate $\gamma = 0.1$, $T=20000$ steps, no momentum, and with random crop and horizontal flip on CIFAR10 with ResNet-18 as a baseline.
We then investigate the effects of different sampling rates under $(4,10^{-5})$-differential privacy with Dirichlet parameter $\alpha=3.0$.
As shown in Table~\ref{tab:diff_sample}, we observe that smaller sampling rates improves the accuracy of private training with Algorithm~\ref{alg:FDP-SGD}, while large sampling rates such as $q=0.04$ lead to worse accuracy.
This degradation in accuracy 
is caused by privacy amplification effects, since when the sampling rate is large, the privacy-utility tradeoff benefits less from privacy amplification.
In practice, a small sampling rate is desired when training with Algorithm~\ref{alg:FDP-SGD}.

\begin{table}[t]
  \centering
  \small
  \begin{tabular}{@{}l *{5}{c}@{}}
    \toprule
    Method
      & Model 
      & $q=0.001$
      & $q=0.01$
      & $q=0.02$
      & $q=0.04$ \\
    \midrule
    \multirow{2}{*}{This work}
      & \multirow{2}{*}{ResNet-18}
      & $89.41$
      & $82.94$
      & $71.18$
      & $56.49$
      \\
      &
      & $\pm3.63$
      & $\pm2.73$
      & $\pm5.53$
      & $\pm3.51$ \\
    \bottomrule
  \end{tabular}
  \caption{Test accuracy (\%) of this work using different sampling rates on CIFAR10 across 5 independent runs for $(4.0,10^{-5})$-differential privacy.}
  \label{tab:diff_sample}
\end{table}

\section{Results on Different Optimizer}
\label{appendix:diff_opt}
We also test our private training framework using ResNet-18 on CIFAR10 with the Adam optimizer~\citep{kingma2014adam}
using learning rate  $\gamma=0.001$, sampling rate $q=0.005$, $T=20000$ steps, no momentum, $\delta=10^{-5}$, 
and $\alpha=3.0$.

\begin{table}[t]
  \centering
  \small
  \begin{tabular}{@{}l *{9}{c}@{}}
    \toprule
    Method
      & Model 
      & $\epsilon = 1$
      & $\epsilon = 2$
      & $\epsilon = 3$
      & $\epsilon = 4$
      & $\epsilon = 6$
      & $\epsilon = 8$
      & $\epsilon = \infty$ \\
    \midrule
    \multirow{2}{*}{This work}
      & \multirow{2}{*}{ResNet-18}
      & $86.78$
      & $87.50$
      & $87.99$
      & $88.27$
      & $88.57$
      & $88.62$
      & $94.58$
      \\
      &
      & $\pm0.66$
      & $\pm0.61$
      & $\pm0.25$
      & $\pm0.39$
      & $\pm0.57$
      & $\pm0.24$
      & \\
    \bottomrule
  \end{tabular}
  \caption{Test accuracy (\%) of Algorithm~\ref{alg:FDP-SGD} using the Adam optimizer on CIFAR10 across 5 independent runs for each $\epsilon$.}
  \label{tab:Adam}
\end{table}

% \begin{table}[!ht]
%   \centering
%   \caption{\hr{Test accuracy (\%) of our work using RMSprop optimizer on CIFAR10.}}
%   \label{tab:rmsprop}
%   \begin{tabular}{@{}l *{9}{c}@{}}
%     \toprule
%     Method
%       & Model 
%       & $\hat{\epsilon} = 1$
%       & $\hat{\epsilon} = 2$
%       & $\hat{\epsilon} = 3$
%       & $\hat{\epsilon} = 4$
%       & $\hat{\epsilon} = 6$
%       & $\hat{\epsilon} = 8$
%       & $\hat{\epsilon} = \infty$ \\
%     \midrule
%     \multirow{2}{*}{Ours}
%       & \multirow{2}{*}{ResNet-18}
%       & $85.08$
%       & $91.91$
%       & $92.39$
%       & $92.94$
%       & $93.37$
%       & $93.39$
%       & $94.58$
%       \\
%       &
%       & $5.43$
%       & $0.41$
%       & $0.34$
%       & $0.25$
%       & $0.22$
%       & $0.31$
%       & \\
%     \bottomrule
%   \end{tabular}
% \end{table}

Table~\ref{tab:Adam} shows the privacy-utility tradeoff for Algorithm~\ref{alg:FDP-SGD} using the Adam optimizer. 
The accuracy increases as $\epsilon$ increases, which follows the same trend as 
using the SGD optimizer. 
Notably, even at strongest privacy level of $\epsilon=1$, the accuracy is $86.78\%$.
These results illustrate 
that Algorithm~\ref{alg:FDP-SGD} is optimizer-agnostic in the sense that any optimizer can be used without harming privacy, while also maintaining high accuracy.

\vskip 0.2in
\bibliography{ref}

\end{document}